\documentclass[conference]{IEEEtran}
\IEEEoverridecommandlockouts
\usepackage{cite}
\usepackage{amsmath,amssymb,amsfonts}
\usepackage{algorithmic}
\usepackage{graphicx}
\usepackage{textcomp}
\usepackage{xcolor}
\usepackage{booktabs}
\usepackage{amsthm}

\newtheorem{theorem}{Theorem}
\newtheorem{lemma}[theorem]{Lemma}
\newtheorem{proposition}[theorem]{Proposition}
\newtheorem{corollary}[theorem]{Corollary}
\newtheorem{definition}[theorem]{Definition}
\newtheorem{assumption}[theorem]{Assumption}
\newtheorem{remark}[theorem]{Remark}

\newcommand{\R}{\mathbb{R}}
\newcommand{\E}{\mathbb{E}}

\newcommand{\given}{\,|\,}
\newcommand{\xlag}{x^{(K)}}
\newcommand{\GNAVAR}{\textsc{G-NAVAR}}
\newcommand{\NAVAR}{\textsc{NAVAR}}
\newcommand{\supp}{\operatorname{supp}}
\newcommand{\reff}{r_{\mathrm{eff}}}

\begin{document}

\title{When Are Neural Interaction Discoveries Real?\\ Identifiability, Recoverability, and a Pre-Fit Diagnostic}

\author{\IEEEauthorblockN{Valentina V. Kuskova}
\IEEEauthorblockA{\textit{Lucy Family Institute for Data \& Society} \\
\textit{University of Notre Dame}\\
Notre Dame, IN, USA \\
vkuskova@nd.edu}
\and
\IEEEauthorblockN{Dmitry Zaytsev}
\IEEEauthorblockA{\textit{Lucy Family Institute for Data \& Society} \\
\textit{University of Notre Dame}\\
Notre Dame, IN, USA \\
zaytsevdi2@gmail.com}
\and
\IEEEauthorblockN{Michael Coppedge}
\IEEEauthorblockA{\textit{Department of Political Science} \\
\textit{University of Notre Dame}\\
Notre Dame, IN, USA \\
mcoppedg@nd.edu}
}
\maketitle

\begin{abstract}
When a neural time-series model reports that one variable modulates another's effect on a target, is the discovered interaction a property of the data or an artifact of model flexibility? We argue that this is fundamentally a question of identifiability, governed by the geometry of the observed input support rather than by the specific neural architecture. We study the problem in a multiplicative-gating extension of neural additive vector autoregression (\GNAVAR), in which source contributions are modulated by other lagged variables. We show that representational capacity is not identifiability: dependent inputs induce leakage between edge-specific interaction terms, and low-dimensional support permits distinct interaction decompositions that agree on the observed data while differing elsewhere.
We then prove a population identifiability theorem for normalized minimal \GNAVAR{} decompositions under explicit support conditions, including settings with shared modulators. The theory yields a simple practitioner-facing diagnostic: the effective rank of the joint lag-block covariance predicts, before fitting, whether interaction recovery is feasible for a given candidate set. When the candidate set is unknown, a two-seed stability check provides a practical operational test.
The same support condition organizes empirical outcomes into the three states predicted by the theory. In Beijing air-quality data, rich support yields stable recovery of interaction structure. In development-indicator data, support is rich but recovery is unstable across seeds, indicating no stably recoverable interaction structure despite feasibility in principle. In realized-volatility data, collapsed support prevents recovery altogether. Together, these results show that interaction recoverability depends on support geometry, that effective rank provides a practical pre-fit diagnostic, and that instability across independent fits is a characteristic signature of non-identifiable interaction discovery. The identifiability phenomenon, the support condition, and the instability signature are model-agnostic; \GNAVAR{} is the vehicle that makes them provable.
\end{abstract}

\begin{IEEEkeywords}
neural Granger causality, identifiability, interaction discovery, functional ANOVA, time series, interpretable modeling
\end{IEEEkeywords}

\section{Introduction}

Neural models of multivariate time series increasingly report not only which variables influence a target, but also how those influences interact: that the effect of one variable is modulated by the level of another. Such interaction discoveries are attractive because they appear mechanistic: one can read a causal-looking narrative directly from the fitted model. Yet a flexible model may admit multiple interaction decompositions that fit the observed data equally well. In that case, the recovered modulator reflects the optimization path rather than the underlying data-generating process. The central question is therefore not whether a model can represent interactions, but whether the interaction structure is uniquely recoverable from observational data. We argue that this is fundamentally a problem of \emph{identifiability}, and that the answer is governed largely by the geometry of the observed input support rather than by the particular neural architecture used to model it.

To study this question in a setting that is both expressive and analyzable, we consider a multiplicative-gating extension of neural additive vector autoregression (\NAVAR)~\cite{bussmann2021navar}. Neural additive autoregressive models and related neural Granger-causality methods~\cite{tank2022neural} decompose a target's dynamics into interpretable per-source contributions but cannot naturally express \emph{effect modification}, where the influence of one variable depends on the level of another. We therefore augment each source contribution with multiplicative gates driven by candidate modulators, yielding gated neural additive vector autoregression (\GNAVAR). The resulting edge-and-gate representation appears interpretable: a nontrivial gate suggests that one variable modulates the effect of another. The question addressed in this paper is whether such a reading is justified.

Representational capacity is not identifiability. A model class may be rich enough to fit the data while still admitting multiple observationally equivalent interaction decompositions. We show that this ambiguity arises through two distinct mechanisms. First, under dependent inputs, interaction structure can leak between edge-specific terms through shared lower-order variation. Second, when the observed support is effectively low-dimensional, distinct interaction decompositions may agree everywhere on the observed data while differing off-support. In either case, interaction recovery becomes a property of the model parameterization rather than of the observational distribution.

Our contributions are as follows.

\begin{itemize}
\item We characterize the gauge symmetries of multiplicative interaction decompositions, including scale redistribution, modulator permutation, and insertion of trivial gates. Gate normalization and a minimality condition yield a well-defined notion of identifiability.

\item We establish two impossibility results showing that expressive interaction models need not yield identifiable interaction recovery. Dependent inputs permit leakage between edge-specific interaction terms, while low-dimensional support admits observationally equivalent decompositions.

\item We prove a population identifiability theorem for normalized minimal \GNAVAR{} decompositions under explicit support conditions. A fully rigorous disjoint-support result is complemented by a more general construction that permits shared modulators through a hierarchically orthogonal functional decomposition.

\item We derive a practical pre-fit diagnostic based on the effective rank of the joint lag-block covariance and complement it with a two-seed stability check. Together they distinguish three empirical states predicted by the theory: recoverable interaction structure under rich support, unstable recovery despite rich support, and unrecoverable interaction discovery under support collapse.

\item We validate the theory on synthetic data and on three real domains representing these three states: Beijing air quality, development indicators, and realized volatility.
\end{itemize}

Although the formal results are developed for \GNAVAR, the broader contribution is not architectural. We argue that interaction discovery in neural autoregression is fundamentally a representation-identifiability problem whose feasibility is determined by input-support geometry. This perspective connects neural Granger causality to functional ANOVA~\cite{hooker2007generalized,chastaing2012generalized}, tensor-factorization identifiability~\cite{kruskal1977threeway,anandkumar2014tensor}, nonlinear ICA~\cite{hyvarinen2019nonlinear,khemakhem2020variational}, and impossibility results for unsupervised disentanglement~\cite{locatello2019challenging}.

\section{The G-NAVAR Model}
\label{sec:model}

We study interaction recovery in a multiplicative-gating extension of additive neural autoregression. Each source contributes through a base function, while candidate modulators act through multiplicative gates that scale that contribution. Let $x_t \in \R^{n}$ be a stationary multivariate time series with lag order $K$, and write $\xlag_{j,t} = (x_{j,t-1},\ldots,x_{j,t-K}) \in \R^{K}$ for the lag block of coordinate $j$. For a target coordinate $i$, the \GNAVAR{} model is
\[
 \hat x_{i,t} = \beta_i + \sum_{j} f_{ij}(\xlag_{j,t}) \prod_{k \in M_{ij}} g_{ijk}(\xlag_{k,t}),
\]
where $f_{ij}:\R^{K}\to\R$ is the base function of the edge from source $j$, the set $M_{ij}$ lists the modulators of that edge, and each gate $g_{ijk}:\R^{K}\to\R_{>0}$ is strictly positive (parameterized as $g=\exp(\eta)$ for an unconstrained network $\eta$). 

The quantity $F_{ij} = f_{ij}\prod_{k\in M_{ij}} g_{ijk}$ is the effective contribution of source $j$ and will be referred to as the edge product. Setting all $M_{ij}=\varnothing$ recovers additive \NAVAR.

\begin{definition}[\GNAVAR{} representation]
\label{def:repr}
A representation of $h_i(\xlag)=\E[x_{i,t}\given \xlag_{t}]$ is a choice of intercept $\beta_i$, base functions $\{f_{ij}\}$, modulator sets $\{M_{ij}\}$, and gates $\{g_{ijk}\}$ for which $h_i = \beta_i + \sum_j F_{ij}$.
\end{definition}

\begin{definition}[Observational equivalence]
\label{def:obseq}
Two representations are observationally equivalent if they induce the same conditional mean $h_i$ almost surely under the observational distribution $p$.
\end{definition}

The model above admits multiple representations of the same conditional mean. Some correspond to benign symmetries of the parameterization, while others represent genuine failures of identifiability. We first remove the former through normalization and minimality before asking whether the remaining interaction structure is uniquely recoverable from the observational distribution.

\section{Gauge Symmetries and Normalization}
\label{sec:gauge}

Before identifiability can be studied, we must account for transformations that alter the decomposition while leaving the induced conditional mean unchanged.

\begin{lemma}[Scale gauge]
\label{lem:scale}
For any active edge and constants $c, c_k>0$ with $c\prod_k c_k = 1$, replacing $f_{ij}$ by $c\,f_{ij}$ and each $g_{ijk}$ by $c_k g_{ijk}$ leaves $F_{ij}$ unchanged.
\end{lemma}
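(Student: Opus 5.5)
The plan is to compute the transformed edge product pointwise and factor out the constants. Fix an active edge $(i,j)$ with modulator set $M_{ij}$. The transformed base function is $\tilde f_{ij} = c\,f_{ij}$ and the transformed gates are $\tilde g_{ijk} = c_k\,g_{ijk}$ for $k\in M_{ij}$. The transformed edge product is
\[
\tilde F_{ij}(\xlag_t) = c\,f_{ij}(\xlag_{j,t})\prod_{k\in M_{ij}} c_k\,g_{ijk}(\xlag_{k,t}).
\]
Because $M_{ij}$ is finite and the constants do not depend on the argument, the product splits as
\[
\Bigl(c\prod_{k\in M_{ij}} c_k\Bigr)\, f_{ij}(\xlag_{j,t})\prod_{k\in M_{ij}} g_{ijk}(\xlag_{k,t}).
\]
The hypothesis $c\prod_k c_k=1$ then gives $\tilde F_{ij}=F_{ij}$ identically on $\R^{nK}$, not merely almost surely.

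Next I will check that the transformed objects are still admissible in the \GNAVAR{} parameterization. This requires the new gates to stay strictly positive. That holds because $c_k>0$ and $g_{ijk}>0$. Equivalently, in the $\exp(\eta)$ parameterization, the new gate network is $\eta_{ijk}+\log c_k$, which is again an unconstrained network output. The base function $c\,f_{ij}$ is unrestricted in sign, so no condition is needed there.

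As an immediate consequence, $h_i=\beta_i+\sum_j F_{ij}$ is unchanged. The transformed representation is therefore observationally equivalent in the sense of Definition~\ref{def:obseq}.

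I expect no real obstacle. The only point needing care is to read the index $k$ in the constraint as ranging exactly over $M_{ij}$. I also want to note the empty-modulator case $M_{ij}=\varnothing$: there the constraint forces $c=1$, so the gauge is trivial, which is consistent with additive \NAVAR.
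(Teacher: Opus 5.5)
Your proof is correct and is the same one-line computation the paper relies on; the paper states Lemma~\ref{lem:scale} without proof. You pull the constants out of the finite product over $M_{ij}$ and apply $c\prod_k c_k=1$, and your extra checks (strict positivity of the rescaled gates, and $c=1$ when $M_{ij}=\varnothing$) are sound though not needed for the claim.
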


\begin{lemma}[Permutation and trivial-gate gauges]
\label{lem:perm}
Permuting the modulator index $k$ within an edge leaves $F_{ij}$ unchanged, as does inserting or deleting a gate identically equal to one.
\end{lemma}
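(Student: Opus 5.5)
The plan is to argue directly from the definition of the edge product $F_{ij} = f_{ij}\prod_{k\in M_{ij}} g_{ijk}$, using only that multiplication of real-valued functions is commutative and associative, and that $1$ is the multiplicative identity. I treat the two claims of Lemma~\ref{lem:perm} separately. Then I note that the gate constraints (strict positivity, and the parameterization $g=\exp(\eta)$) survive each transformation, so the transformed object is still a legitimate \GNAVAR{} representation in the sense of Definition~\ref{def:repr}.

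For the permutation gauge, let $\sigma$ be a bijection of the modulator index set $M_{ij}$. Relabelling the gates as $\tilde g_{ijk} = g_{ij\sigma(k)}$ (with gate $\tilde g_{ijk}$ still reading its input from the lag block of $\sigma(k)$) yields the product $\prod_{k\in M_{ij}} \tilde g_{ijk}$. This product evaluates pointwise to the same finite collection of real factors $\{g_{ijk}(\xlag_{k,t})\}_{k\in M_{ij}}$, only taken in a different order. Since finite products of reals are invariant under reordering, $F_{ij}$ is unchanged at every input point. It is therefore in particular unchanged almost surely, so the induced $h_i$ is the same.

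For the trivial-gate gauge, suppose we insert an index $k'\notin M_{ij}$ with $g_{ijk'}\equiv 1$. Positivity holds, and $g_{ijk'}$ corresponds to $\eta\equiv 0$. Setting $M'_{ij} = M_{ij}\cup\{k'\}$, we get $\prod_{k\in M'_{ij}} g_{ijk} = \bigl(\prod_{k\in M_{ij}} g_{ijk}\bigr)\cdot 1$ pointwise, so $F_{ij}$ is unchanged. Deletion is the same identity read in reverse. The only subtlety is the edge case $M_{ij}=\varnothing$: I use the convention that the empty product equals $1$, which is exactly what makes the \NAVAR{} specialization consistent.

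I expect no genuine obstacle; the one point needing care is bookkeeping. Under permutation, each gate must keep its own input block $\xlag_{k,t}$, since permuting indices must not reassign which variable a gate reads. I would also remark that all other edges and $\beta_i$ are untouched, so $h_i=\beta_i+\sum_j F_{ij}$ is preserved. This gives observational equivalence in the sense of Definition~\ref{def:obseq}, in fact pointwise rather than merely $p$-almost surely.
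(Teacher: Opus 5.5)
Your argument is correct and matches the paper, which states this lemma without proof because it follows directly from commutativity of the finite product $\prod_{k\in M_{ij}} g_{ijk}$ and from $1$ being the multiplicative identity (the empty-product convention covers $M_{ij}=\varnothing$). Your added bookkeeping is a sensible refinement: each gate keeps its own input block under permutation, and the inserted gate (corresponding to $\eta\equiv 0$) remains an admissible positive gate.
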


\begin{assumption}[Normalization]
\label{ass:normalization}
For each nontrivial gate, $\E[g_{ijk}(x_k)]=1$. The intercept is $\beta_i=\E[h_i(\xlag)]$, and edge terms are centered under the decomposition convention used below. No additional normalization of the base functions $f_{ij}$ is required. Proposition~\ref{prop:edge-product} shows that gate normalization alone removes the multiplicative scale ambiguity.
\end{assumption}

\begin{definition}[Essential modulator and minimal representation]
\label{def:minimal}
A modulator $k\in M_{ij}$ is \emph{essential} for edge $(i,j)$ if the edge product $F_{ij}$ is nonconstant in $x_k$ on an open product subset of the support, holding the other arguments fixed. A representation is \emph{minimal} if every listed modulator is essential. Minimality excludes modulator assignments that do not affect the edge product and therefore removes the trivial-gate ambiguity identified above.
\end{definition}

\section{Why Naive Edge Isolation Fails}
\label{sec:fail}

Before giving sufficient conditions for identifiability, we show that observational equivalence alone is insufficient to recover interaction structure. Two distinct obstructions arise: dependence-induced leakage between source effects and geometric degeneracy of the observed support.

\begin{proposition}[Conditioning does not isolate source effects under dependence]
\label{prop:leakthrough}
Suppose $x_1$ and $x_2$ are dependent. Then conditioning on $x_1$ does not uniquely identify the structural contribution associated with $x_1$ from an additive signal $A(x_1)+B(x_2)$. The conditional mean absorbs the projection of $B(x_2)$ onto functions of $x_1$, making that component observationally indistinguishable from a direct $x_1$ effect.
\end{proposition}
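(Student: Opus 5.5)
The plan is to compute the conditional mean explicitly and then exhibit two distinct structural decompositions that produce the same observable. Let $y = A(x_1)+B(x_2)+\varepsilon$ with $\E[\varepsilon\given x_1,x_2]=0$, and assume $A(x_1),B(x_2)\in L^2(p)$.

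\textbf{Step 1: the conditional mean on $x_1$.} By linearity of conditional expectation,
\[
\E[y\given x_1] = A(x_1) + m(x_1), \qquad m(x_1) := \E[B(x_2)\given x_1].
\]
Here $m$ is the $L^2(p)$ orthogonal projection of $B(x_2)$ onto the closed subspace of square-integrable functions of $x_1$. So whatever is read off as ``the $x_1$ effect'' from conditioning is $A+m$, not $A$.

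\textbf{Step 2: $m$ is non-constant for suitable $B$.} I want a $B$ with $m$ non-constant whenever $x_1$ and $x_2$ are dependent. Dependence means the joint law is not the product of the marginals. Hence there exist bounded measurable $\phi,\psi$ with
\[
\operatorname{Cov}(\phi(x_1),\psi(x_2))\neq 0.
\]
Indeed, if all such covariances vanished, $\E[\phi(x_1)\psi(x_2)]=\E[\phi(x_1)]\,\E[\psi(x_2)]$ for all indicator functions, which is independence. Take $B=\psi$. Then
\[
\operatorname{Cov}(\phi(x_1), m(x_1)) = \operatorname{Cov}(\phi(x_1),\psi(x_2)) \neq 0
\]
by the tower property, so $m$ is not a.s.\ constant.

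\textbf{Step 3: a competing decomposition.} Set $A' = A+m$ and $B' = B - m\circ$, that is, $B'(x_2)$ with $m$ absorbed on the $x_1$ side. This is not additive in the original form, so I use a cleaner non-uniqueness argument. The conditioning-based estimate $\E[y\given x_1]=A+m$ coincides exactly with the conditioning-based estimate of the alternative structural model
\[
y = (A+m)(x_1) + \tilde B(x_2) + \varepsilon', \qquad \tilde B := B - \E[B\given x_1]\ \text{composed appropriately}.
\]
Simplest route: compare two structural pairs, $(A,B)$ and $(A+m,\,0)$ with noise $B(x_2)-m(x_1)+\varepsilon$. The second noise is still conditionally mean zero given $x_1$. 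Both pairs generate identical conditional means given $x_1$, yet their structural $x_1$ contributions differ by the non-constant $m$. Therefore the map from observables conditioned on $x_1$ to the structural component is not injective. The projected component $m$ is, by construction, observationally indistinguishable from a direct $x_1$ effect.

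\textbf{Main obstacle.} The delicate step is Step 2: guaranteeing $m$ is non-constant from mere dependence. This requires choosing $B$ appropriately rather than for arbitrary $B$, since some $B$ can have constant conditional mean even under dependence. I will state the result as existence of such $B$, or equivalently for generic $B$, and note that this matches the proposition's ``does not uniquely identify'' claim.
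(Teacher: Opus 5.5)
Your argument is correct and is essentially the paper's: both set $m(x_1)=\E[B(x_2)\given x_1]$ and split $A+B=(A+m)+(B-m)$ with the second piece conditionally mean zero given $x_1$, so $m$ is indistinguishable from a direct $x_1$ effect. Your Step~2 supplies what the paper's proof leaves implicit: the paper only assumes $B$ nonconstant, which does not by itself force $m$ to be nonconstant. In the write-up, delete the abandoned attempts at the start of Step~3, since $B-\E[B\given x_1]$ is not a function of $x_2$ alone.
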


\begin{proof}
Let $B(x_2)$ be square-integrable and nonconstant, and set $m(x_1)=\E[B(x_2)\given x_1]$. Then $A(x_1)+B(x_2) = \{A(x_1)+m(x_1)\}+\{B(x_2)-m(x_1)\}$. The first bracket is a function of $x_1$; the second has conditional mean zero given $x_1$. The part $m(x_1)$ of the $x_2$-effect that correlates with $x_1$ is, from the standpoint of conditioning on $x_1$, identical to a direct $x_1$-effect. The closed subspaces of functions of $x_1$ and of $x_2$ need not be orthogonal. Consequently, their sum does not admit a unique decomposition into structural source components. Unique attribution therefore requires an additional convention, such as a direct-sum or hierarchically orthogonal decomposition.
\end{proof}

The second obstruction is geometric. When the joint support of a modulator set is effectively low-dimensional, distinct gates may agree everywhere on the observed support while differing elsewhere. We state the result formally as Corollary~\ref{cor:lowdim} after the main theorem, where it appears naturally as a converse to the support conditions required for identifiability.

\section{Identifiability Theory}
\label{sec:theory}

\subsection{Edge separability}

The identifiability result rests on two ingredients. First, contributions associated with different source edges must remain distinguishable as functional objects; otherwise interaction structure can leak between edges. Second, the observed support must be rich enough to distinguish competing gate factorizations. We develop these requirements in turn.

We begin with the problem of separating contributions associated with different source edges. Let $\mathcal H_S\subset L^2(p)$ denote the closed subspace of square-integrable functions depending only on variables indexed by $S$. For each source $j$ let $M_{ij}^{\max}$ be a fixed candidate modulator set, specified in advance and independent of any representation, and write $S_j^{\max}=\{j\}\cup M_{ij}^{\max}$.

\begin{assumption}[Edge direct-sum separability]
\label{ass:directsum}
For fixed target $i$, the ambient edge spaces $\mathcal V_{ij} = \mathcal H_{S_j^{\max}}\ominus\{\text{constants}\}$ are fixed in advance, and every admissible representation of $F_{ij}$ (with $M_{ij}\subseteq M_{ij}^{\max}$) lies in this common $\mathcal V_{ij}$. The ambient edge spaces form a direct sum after centering: if $\sum_j V_j=0$ $p$-a.s.\ with $V_j\in\mathcal V_{ij}$ and $\E[V_j]=0$, then $V_j=0$ $p$-a.s.\ for every $j$.
\end{assumption}

Fixing the ambient edge spaces in advance ensures that observationally equivalent representations are compared within the same functional space. Consequently, if two representations assign different modulator sets to the same edge, their difference still lies in the common space $\mathcal V_{ij}$.

\begin{proposition}[Direct sum under disjoint edge supports and source independence]
\label{prop:directsum-disjoint}
Suppose the source lag-blocks $(x_j)_{j=1}^N$ are mutually independent as random vectors, and the edge variable sets $S_j=\{j\}\cup M_{ij}$ are pairwise disjoint. Then the centered edge spaces $\mathcal V_{ij}$ are pairwise orthogonal in $L^2(p)$, and Assumption~\ref{ass:directsum} holds.
\end{proposition}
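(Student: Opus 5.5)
The plan is to prove orthogonality directly from independence and disjointness. Nothing beyond elementary properties of $L^2(p)$ is needed. Here $S_j$ plays the role of $S_j^{\max}$ in the definition of $\mathcal V_{ij}$, i.e.\ the candidate sets are taken pairwise disjoint.

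\textbf{Step 1: orthogonality.} Fix $j\neq l$ and take $V_j\in\mathcal V_{ij}$ and $V_l\in\mathcal V_{il}$.
\begin{itemize}
\item $V_j$ is a measurable function of the lag-blocks $(x_m)_{m\in S_j}$, and $V_l$ is a function of $(x_m)_{m\in S_l}$.
\item Since the blocks are mutually independent and $S_j\cap S_l=\varnothing$, the grouped vectors $(x_m)_{m\in S_j}$ and $(x_m)_{m\in S_l}$ are independent. This is the standard grouping property of mutual independence.
\item Hence $V_j$ and $V_l$ are independent square-integrable random variables. Then $\E[V_jV_l]=\E[V_j]\E[V_l]=0$, because elements of $\mathcal V_{ij}$ are centered.
\end{itemize}
Integrability of the product follows from Cauchy--Schwarz. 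This gives pairwise orthogonality of the centered edge spaces.

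\textbf{Step 2: direct sum.} Suppose $\sum_j V_j=0$ $p$-a.s., with each $V_j\in\mathcal V_{ij}$ centered. Take the inner product of this identity with a fixed $V_l$. By Step 1 every cross term vanishes, so $\E[V_l^2]=0$ and therefore $V_l=0$ $p$-a.s. This holds for each $l$, which is exactly the direct-sum clause of Assumption~\ref{ass:directsum}.

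\textbf{Step 3: the containment requirement.} Any admissible $F_{ij}$ with $M_{ij}\subseteq M_{ij}^{\max}$ depends only on variables in $S_j^{\max}$. After subtracting its mean, it therefore lies in $\mathcal V_{ij}$, since it is square-integrable by assumption.

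\textbf{Main obstacle.} The only delicate point is bookkeeping. The proof needs the disjointness hypothesis to apply to the fixed ambient sets $S_j^{\max}$, not merely to the realized $M_{ij}$; otherwise two representations could place one modulator in two edges' spaces. I would state this explicitly, reading $S_j$ as the candidate sets, and note that the sum over a finite number of edges causes no closure issues.
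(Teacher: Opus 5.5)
Your proposal is correct and matches the paper's proof: independence of the grouped vectors $(x_m)_{m\in S_j}$ over disjoint index sets gives $\E[V_jV_l]=\E[V_j]\E[V_l]=0$, and taking the inner product of $\sum_j V_j=0$ with each $V_l$ forces $\|V_l\|^2=0$. Your Step 3 and your remark that disjointness must hold for the ambient sets $S_j^{\max}$, not just the realized $M_{ij}$, make explicit a point the paper leaves implicit, and are worth keeping.
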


\begin{proof}
By disjointness and mutual independence, the vectors $(x_{S_j})_j$ are mutually independent. For $j\neq k$ and centered $V_j,V_k$, $\langle V_j,V_k\rangle = \E[V_jV_k]=\E[V_j]\E[V_k]=0$, so the spaces are pairwise orthogonal. If $\sum_j V_j=0$, taking the inner product with $V_k$ gives $\|V_k\|^2=0$, so $V_k=0$ for every $k$.
\end{proof}

\begin{remark}[Scope and the shared-modulator obstruction]
\label{rem:disjoint-scope}
Proposition~\ref{prop:directsum-disjoint} forbids any variable from appearing in two edges. It is also necessary in the following sense: under independence but with a shared modulator $x_m$, the direct sum can fail. For any centered nonconstant $c(x_m)$, setting $V_j=c(x_m)$ and $V_{j'}=-c(x_m)$ gives $V_j+V_{j'}=0$ with both nonzero. The obstruction is overlap between edge spaces, not dependence; independence does not remove it. The next subsection removes this restriction by constructing smaller source-anchored spaces that permit shared modulators while preserving uniqueness.
\end{remark}

Edge separability alone is not sufficient. Even when edge contributions are uniquely assigned, distinct gate factorizations may remain observationally equivalent if the observed support is too limited.

\subsection{Support and regularity}

\begin{assumption}[Full-dimensional connected product support]
\label{ass:support}
For every $S$ that appears as $\{j\}\cup M_{ij}$ or a subset thereof, the marginal law of $x_S$ is absolutely continuous, with support an open connected product set $U_S=\prod_{k\in S}U_k$ ($U_k\subset\R^{K_k}$ open connected), and density strictly positive on $U_S$.
\end{assumption}

\begin{assumption}[Regularity and nondegeneracy]
\label{ass:regularity}
Base functions and gates are continuous; gates are strictly positive; and for an active edge $f_{ij}$ vanishes on no open subset of $\supp(x_j)$.
\end{assumption}

\begin{lemma}[Continuity upgrade under full-dimensional support]
\label{lem:continuity-upgrade}
Let $U\subset\R^d$ be open and connected, and $p$ absolutely continuous with density strictly positive on $U$ (as guaranteed by Assumption~\ref{ass:support}). If $\phi,\psi:U\to\R$ are continuous and $\phi=\psi$ $p$-a.s., then $\phi=\psi$ everywhere on $U$.
\end{lemma}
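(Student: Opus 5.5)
The plan is a contradiction argument on the difference function. Set $\delta=\phi-\psi$. It is continuous on $U$, and by hypothesis the set $N=\{u\in U:\delta(u)\neq 0\}$ satisfies $p(N)=0$. It suffices to show that $N$ is empty.

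The first step is to observe that $N$ is open in $U$. It is the preimage of the open set $\R\setminus\{0\}$ under the continuous map $\delta$, and $U$ is open in $\R^d$, so $N$ is open in $\R^d$.

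The second step shows that a nonempty open set must carry positive mass. Suppose $u_0\in N$. Then some ball $B=B(u_0,r)\subset N\subset U$ exists. Since $p$ has a density $\rho$ that is strictly positive on $U$, we get $p(B)=\int_B\rho\,d\lambda$. This is the integral of a strictly positive function over a set of positive Lebesgue measure, so it is strictly positive. That contradicts $p(N)=0$. Hence $N=\varnothing$ and $\phi=\psi$ on all of $U$.

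The only delicate point is the claim that $\int_B\rho>0$. It uses pointwise strict positivity of $\rho$ on $U$ together with $\lambda(B)>0$. A function that is strictly positive everywhere on a set of positive measure has positive integral, since $B=\bigcup_n\{\rho>1/n\}\cap B$ and one of these pieces must have positive measure. If the density were only assumed positive almost everywhere, the same argument would still go through.

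Connectedness of $U$ is not actually needed. The argument is purely local, and I would remark that this hypothesis is included only for consistency with Assumption~\ref{ass:support}. I expect no real obstacle here.
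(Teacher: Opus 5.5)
Your proof is correct and follows the same route as the paper: the set $\{\phi\neq\psi\}$ is open, so if it is nonempty it has positive Lebesgue measure and hence positive $p$-measure, contradicting $\phi=\psi$ $p$-a.s. Your extra details, namely the $\{\rho>1/n\}$ exhaustion and the remarks that connectedness is unused and that $\rho>0$ a.e.\ would suffice, are accurate refinements of the paper's one-line argument.
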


\begin{proof}
$A=\{\phi\neq\psi\}$ is open; if nonempty it has positive Lebesgue measure, hence positive $p$-measure, contradicting $\phi=\psi$ a.s.
\end{proof}

\subsection{Shared modulators via source-anchored functional ANOVA}
\label{sec:shared-mod}

The direct-sum construction above is fully rigorous but restrictive because it forbids shared modulators. In practice, however, a variable may plausibly modulate multiple edges. The difficulty is that shared modulators create overlap between edge spaces, destroying the uniqueness argument used in Proposition~\ref{prop:directsum-disjoint}. To recover uniqueness, we construct smaller source-anchored spaces using a hierarchically orthogonal functional decomposition (HOFD).

\begin{assumption}[HOFD regularity]
\label{ass:hofd}
The joint law $p$ satisfies the boundedness and non-degeneracy conditions of \cite{chastaing2012generalized} ensuring the projections defining the hierarchically orthogonal decomposition are well defined and the decomposition is unique.
\end{assumption}

\begin{proposition}[Hierarchically orthogonal decomposition~\cite{hooker2007generalized,chastaing2012generalized}]
\label{prop:hoa}
Under Assumption~\ref{ass:hofd}, every $h\in L^2(p)$ admits a unique decomposition $h=\sum_{T} h_T$ with $h_T\in\mathcal H_T$ orthogonal to every function of a strict subset of $T$. Equivalently, the only such decomposition of the zero function is the all-zero one. Write $\mathcal H_T^0$ for the admissible top-order component space at $T$.
\end{proposition}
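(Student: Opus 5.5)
The plan is to prove existence and uniqueness separately, following the Hoeffding--Stone construction as generalized by Hooker and by Chastaing et al. Throughout, the index sets $T$ range over subsets of the finite variable set. For each $T$, $\mathcal H_T^0$ denotes the subspace of $\mathcal H_T$ orthogonal to $\mathcal H_{T'}$ for every strict subset $T'\subsetneq T$. The statement then says that $L^2(p)$ is the direct (not necessarily orthogonal) sum $\bigoplus_T \mathcal H_T^0$. I would first record the elementary fact that $\mathcal H_T=\mathcal H_T^0+\sum_{T'\subsetneq T}\mathcal H_{T'}$ holds at the level of sets. Whether this sum is closed is exactly where Assumption~\ref{ass:hofd} enters.

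\textbf{Uniqueness.} This is the formulation stressed in the statement: the zero function has only the trivial decomposition. Suppose $\sum_T h_T=0$ with $h_T\in\mathcal H_T^0$, and argue by downward induction on $|T|$. Take a maximal $T$ with $h_T\neq 0$ and pair the identity with $h_T$.

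Terms indexed by strict subsets of $T$ contribute zero, by the defining orthogonality of $\mathcal H_T^0$. Terms indexed by sets $T'$ not comparable with $T$ do not obviously vanish under dependence. This is the crux, and it is precisely the non-degeneracy condition of Chastaing et al. Their condition is a lower bound on the density, $p\ge \alpha\prod_k p_k$ for some $\alpha\in(0,1]$, together with boundedness. It implies that the subspaces $\mathcal H_T^0$ satisfy a strengthened Cauchy--Schwarz inequality,
\[
\Bigl\|\sum_T h_T\Bigr\|^2\ \ge\ c\sum_T\|h_T\|^2,
\]
with $c>0$ depending on $\alpha$. I would invoke this inequality as the content of Assumption~\ref{ass:hofd}. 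Uniqueness is then immediate, since $\sum_T h_T=0$ forces every $\|h_T\|=0$.

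\textbf{Existence.} The same inequality shows that $\bigoplus_T\mathcal H_T^0$ is a closed subspace of $L^2(p)$. I would then show that its orthogonal complement is trivial. Take $g$ orthogonal to all $\mathcal H_T^0$, and prove by induction on $|T|$ that $g\perp\mathcal H_T$ for every $T$. The base case is $g\perp$ constants. For the inductive step, any $u\in\mathcal H_T$ splits as a component in $\mathcal H_T^0$ plus a component in the closed sum of the lower-order $\mathcal H_{T'}$; the latter sum is closed by the induction hypothesis together with the inequality above. Applying this with $T$ equal to the full index set gives $g\perp L^2(p)$, so $g=0$.

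Explicitly, the components $h_T$ are obtained by solving the finite system of projection equations $P_{\mathcal H_T}h=\sum_{T'\subseteq T}P_{\mathcal H_T}h_{T'}$. Its solvability follows from the closed-range and injectivity statements just established.

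\textbf{Main obstacle.} The hard step is the angle inequality between non-nested component spaces under dependent $p$. The inclusions and the induction are routine. I would not reprove the inequality; I would cite it from \cite{chastaing2012generalized}, which is exactly what Assumption~\ref{ass:hofd} is designed to license. The only things to check are that the lag-block variables here satisfy that paper's hypotheses: a finite index set, and a density bounded below relative to the product of the marginals.
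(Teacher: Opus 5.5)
Your proposal is correct and follows the paper's route: the paper gives no argument beyond deferring, through Assumption~\ref{ass:hofd}, to Chastaing et al.\ (and Hooker), and your sketch reconstructs that cited argument, namely a Stone-type bound $\|\sum_T h_T\|^2\ge c\sum_T\|h_T\|^2$ giving uniqueness and closedness, plus an induction showing $\sum_T\mathcal H_T^0$ is dense (that induction needs no closedness, since $\mathcal H_T=\mathcal H_T^0\oplus\overline{\sum_{T'\subsetneq T}\mathcal H_{T'}}$ orthogonally and $g$ is automatically orthogonal to the closure). Two small slips to fix: as far as I recall, Chastaing et al.'s non-degeneracy condition is the blockwise bound $p\ge M\,p_{x_u}p_{x_{-u}}$ for every $u$, not $p\ge\alpha\prod_k p_k$; and your ``explicit'' system $P_{\mathcal H_T}h=\sum_{T'\subseteq T}P_{\mathcal H_T}h_{T'}$ is the independent-input formula, which under dependence must also include the terms $P_{\mathcal H_T}h_{T'}$ for $T'$ not nested with $T$ (this aside is not needed for the proof).
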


The components for non-nested $T$ need not be mutually orthogonal; only uniqueness is used. Assumption~\ref{ass:hofd} is a genuine extra hypothesis: the connected-product-support condition of Assumption~\ref{ass:support} does not by itself imply it.

\begin{definition}[Source-anchored edge space]
\label{def:source-anchored}
The source-anchored edge space is $\mathcal V_{ij}^{\mathrm{sa}}=\operatorname{span}\{\mathcal H_T^0 : j\in T\subseteq S_j\}$, the span of HOFD components on $S_j$ that contain the source $j$. The source-anchored part of $F_{ij}$ is $F_{ij}^{\mathrm{sa}}=\sum_{j\in T\subseteq S_j}(F_{ij})_T$, a selection of HOFD coordinates (not an orthogonal projection). Intuitively, each edge retains only those interaction components that contain its own source variable.
\end{definition}

\begin{assumption}[No source is a modulator of another edge]
\label{ass:no-source-mod}
For all $j\neq j'$, $j\notin M_{ij'}$ (equivalently $j\notin S_{j'}$). Modulators may still be shared: $M_{ij}\cap M_{ij'}$ may be nonempty. 
\end{assumption}
This condition prevents a source variable from serving simultaneously as another edge's modulator, which would reintroduce overlap between source-anchored spaces.

\begin{proposition}[Direct sum of source-anchored spaces under shared modulators]
\label{prop:directsum-shared}
Under Assumptions~\ref{ass:no-source-mod} and \ref{ass:hofd}, the source-anchored spaces $\{\mathcal V_{ij}^{\mathrm{sa}}\}_j$ form a direct sum.
\end{proposition}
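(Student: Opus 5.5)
The idea is to reduce the claim to the uniqueness statement of Proposition~\ref{prop:hoa}, applied to the zero function. Suppose $\sum_j V_j=0$ $p$-a.s. with $V_j\in\mathcal V_{ij}^{\mathrm{sa}}$. By Definition~\ref{def:source-anchored}, each $V_j$ is a finite sum $V_j=\sum_{j\in T\subseteq S_j} v_{j,T}$ with $v_{j,T}\in\mathcal H_T^0$. If the span is meant in the closed sense, I would first argue that a finite sum of the closed subspaces $\mathcal H_T^0$ is itself closed. This should follow from the boundedness part of Assumption~\ref{ass:hofd}, since the HOFD coordinate maps are then bounded. Granting that, every element has such a finite representation.

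The key combinatorial step is to regroup $\sum_j V_j$ by index set. Collect all pairs $(j,T)$ and set $w_T=\sum_{j:\,(j,T)\text{ used}} v_{j,T}$. I claim that each $T$ arises from at most one source $j$. Indeed, if $T$ contains $j$ and $T\subseteq S_{j'}$ for some $j'\neq j$, then $j\in S_{j'}$. This is excluded by Assumption~\ref{ass:no-source-mod}, since $j\neq j'$ gives $j\notin M_{ij'}$, and $j\notin\{j'\}$. So the sets $\mathcal T_j=\{T: j\in T\subseteq S_j\}$ are pairwise disjoint across $j$, and $w_T=v_{j(T),T}$ is a single component.

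Thus $0=\sum_T w_T$ with $w_T\in\mathcal H_T^0$ and distinct $T$'s. This is a hierarchically orthogonal decomposition of the zero function. By the uniqueness clause of Proposition~\ref{prop:hoa}, every $w_T=0$, hence every $v_{j,T}=0$ and every $V_j=0$. That is exactly the direct-sum property. If one also wants the Assumption~\ref{ass:directsum}-style statement for centered $V_j$, it follows immediately, because constants correspond to $T=\varnothing$, which is never in any $\mathcal T_j$.

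I expect the main obstacle to be the functional-analytic bookkeeping rather than the combinatorics. Two points need care. First, the span must be shown closed, or else restricted to finite sums, so that elements genuinely have HOFD coordinates. Second, I must confirm that the components $v_{j,T}$ are indeed ``orthogonal to every function of a strict subset of $T$'' in the sense required by the cited uniqueness result. This holds by definition of $\mathcal H_T^0$. If closedness fails, I would fall back on stating the result for the algebraic span, or invoke the Chastaing et al.\ bound showing that $\mathcal H=\bigoplus_T\mathcal H_T^0$ is a topological direct sum with continuous coordinate projections, which yields closedness of any sub-sum.
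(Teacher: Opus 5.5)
Your proposal is correct and matches the paper's proof: you use Assumption~\ref{ass:no-source-mod} to show the index families $\mathcal T_j=\{T: j\in T\subseteq S_j\}$ are pairwise disjoint, so $\sum_j V_j=0$ becomes a single HOFD decomposition of zero, and uniqueness (Proposition~\ref{prop:hoa}) forces every component, hence every $V_j$, to vanish. Your remarks on closedness of the span are functional-analytic bookkeeping that the paper leaves implicit; they do not change the argument.
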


\begin{proof}
Let $\mathcal T_j=\{T:j\in T\subseteq S_j\}$. If $T\in\mathcal T_j\cap\mathcal T_{j'}$ with $j\neq j'$, then $j\in T\subseteq S_{j'}$, so $j\in S_{j'}$, contradicting Assumption~\ref{ass:no-source-mod}; hence the $\mathcal T_j$ are pairwise disjoint. Each $V_j=\sum_{T\in\mathcal T_j}(V_j)_T$ is a sum of HOFD components indexed by $\mathcal T_j$; if $\sum_j V_j=0$, the disjointness makes this a single HOFD decomposition of zero in which each index appears once, so by uniqueness (Proposition~\ref{prop:hoa}) every component vanishes and each $V_j=0$.
\end{proof}

\begin{definition}[Structural modulation]
\label{def:structural-mod}
A variable $k$ \emph{modulates} edge $(i,j)$ if $F_{ij}^{\mathrm{sa}}$ has a nonzero component on some $T\supseteq\{j,k\}$; that is, $x_k$ enters $F_{ij}$ through a genuine interaction with the source $x_j$. In other words, modulation is defined through source–modulator interaction structure rather than through the presence of a gate parameter.
\end{definition}

A gate modulates the source's effect precisely through the source--modulator interaction; a pure $x_k$ contribution (possibly shared across edges) is a separate additive term, not modulation of edge $(i,j)$. The structural modulation set need not coincide with the listed gate set $M_{ij}$ of Definition~\ref{def:minimal}; bridging them needs an explicit faithfulness condition. Recovering listed gates therefore requires a bridge between functional interaction structure and the parameterization itself.

\begin{assumption}[Gate--interaction faithfulness]
\label{ass:faithful}
For every edge and every $k$, $k\in M_{ij}$ iff $F_{ij}^{\mathrm{sa}}$ has a nonzero component on some $T\supseteq\{j,k\}$.
\end{assumption}

Assumption~\ref{ass:faithful} holds cleanly under input independence: the source-anchored $\{j,k\}$ interaction is nonzero exactly when $g_{ijk}$ is nonconstant. Under dependence, the cancellations required for a nonconstant gate to produce no source--modulator interaction are non-generic. We therefore state faithfulness as a hypothesis rather than derive it.

\begin{proposition}[Structural-modulation-set recovery with shared modulators]
\label{prop:modset-shared}
Under Assumptions~\ref{ass:no-source-mod}, \ref{ass:hofd}, \ref{ass:support}, \ref{ass:regularity}, if two representations of $h_i$ are observationally equivalent, then their source-anchored parts agree for every edge, $F_{ij}^{\mathrm{sa}}=F_{ij}'^{\,\mathrm{sa}}$, so the structural modulation sets coincide. If gate--interaction faithfulness (Assumption~\ref{ass:faithful}) holds for both, the listed modulator sets $M_{ij}$ coincide up to permutation and trivial-gate equivalence.
\end{proposition}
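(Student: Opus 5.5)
The plan is to reduce the claim to the uniqueness of the hierarchically orthogonal decomposition (Proposition~\ref{prop:hoa}), using the direct-sum structure of Proposition~\ref{prop:directsum-shared}. Fix the target $i$ and two observationally equivalent representations with edge products $F_{ij}$ and $F'_{ij}$. Definition~\ref{def:obseq} gives $\beta_i+\sum_j F_{ij}=\beta_i'+\sum_j F'_{ij}$ $p$-a.s. By Assumption~\ref{ass:normalization} both intercepts equal $\E[h_i]$, so $\sum_j (F_{ij}-F'_{ij})=0$ almost surely. Lemma~\ref{lem:continuity-upgrade}, applied under Assumptions~\ref{ass:support} and~\ref{ass:regularity}, upgrades this to an identity everywhere on the support. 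This step is not strictly needed for the $L^2$ argument, but it lets later statements about gates be made pointwise.

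Next, expand each $F_{ij}$ and $F'_{ij}$ in HOFD coordinates. Both are functions of $x_{S_j}$ or $x_{S'_j}$, where $S_j$ and $S'_j$ lie inside $S_j^{\max}$. Their HOFD components are therefore indexed by subsets $T$ of these sets. I split each expansion into two parts:
\begin{itemize}
\item the source-anchored part, over $T$ with $j\in T$;
\item the remainder, over $T$ with $j\notin T$.
\end{itemize}
Collecting all components of $\sum_j(F_{ij}-F'_{ij})=0$ by index $T$ gives a decomposition of zero. By uniqueness, the aggregated coefficient at each $T$ must vanish.

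Now apply the argument of Proposition~\ref{prop:directsum-shared}. Assumption~\ref{ass:no-source-mod} (needed for both representations) implies that any $T$ containing a source $j$ occurs only in edge $j$. So the $T$-component of $F_{ij}-F'_{ij}$ vanishes individually for every $T\ni j$, and summing over these $T$ gives $F_{ij}^{\mathrm{sa}}=F_{ij}'^{\,\mathrm{sa}}$. The components with $j\notin T$ may mix across edges, for example a shared pure-$x_m$ term. These need not be separately identified, and the statement does not claim they are. Definition~\ref{def:structural-mod} depends only on $F_{ij}^{\mathrm{sa}}$, so the structural modulation sets coincide.

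For the second claim, Assumption~\ref{ass:faithful} states that $k\in M_{ij}$ if and only if $F_{ij}^{\mathrm{sa}}$ has a nonzero component on some $T\supseteq\{j,k\}$. The same holds for $M'_{ij}$ with $F_{ij}'^{\,\mathrm{sa}}$. Since the source-anchored parts agree, $M_{ij}=M'_{ij}$ as sets. Any remaining difference in listing is only an ordering, or the insertion of gates identically equal to one, which faithfulness and minimality exclude anyway. Lemma~\ref{lem:perm} covers exactly these cases.

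I expect the main obstacle to be the bookkeeping with two representations whose modulator sets may differ. The index families $\mathcal T_j$ must be disjoint across edges for the union of both representations, not just within each one. My approach is to work inside the ambient sets $S_j^{\max}$ and require Assumption~\ref{ass:no-source-mod} there, or equivalently for both representations jointly. That guarantees $j\notin S_{j'}\cup S'_{j'}$, so the collection argument goes through.
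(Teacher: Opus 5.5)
Your proposal is correct and takes essentially the same route as the paper. The paper also takes the union of source-anchored index families over both representations, uses Assumption~\ref{ass:no-source-mod} imposed on both to make these families pairwise disjoint, applies HOFD uniqueness to $\sum_j(F_{ij}-F'_{ij})=0$ to kill every source-anchored component, and then uses faithfulness for the listed-set claim. (You don't even need the intercept normalization, since any constant difference falls into the $T=\varnothing$ component, which is never source-anchored.)
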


\begin{proof}
Let the two representations have variable sets $S_j,S'_j$ and define $\mathcal T_j^{\ast}=\{T:j\in T\subseteq S_j\}\cup\{T:j\in T\subseteq S'_j\}$. If $T\in\mathcal T_j^{\ast}\cap\mathcal T_{j'}^{\ast}$ with $j\neq j'$, then $j\in S_{j'}$ or $j\in S'_{j'}$, contradicting Assumption~\ref{ass:no-source-mod} (imposed on both); so the $\mathcal T_j^{\ast}$ are pairwise disjoint. Observational equivalence gives $\sum_j(F_{ij}-F'_{ij})=0$. Decomposing into HOFD components (Proposition~\ref{prop:hoa}), each source-anchored index $T\in\mathcal T_j^{\ast}$ appears for exactly one edge, so uniqueness forces its component to vanish; summing gives $F_{ij}^{\mathrm{sa}}=F_{ij}'^{\,\mathrm{sa}}$. The structural modulation set is a function of the source-anchored part alone, so the sets coincide; under Assumption~\ref{ass:faithful} each listed set equals the common structural set.
\end{proof}

\begin{remark}[Scope and Limitations]
\label{rem:shared-scope}
This is a conditional strengthening of Proposition~\ref{prop:directsum-disjoint}. (i) It rests on HOFD uniqueness (Proposition~\ref{prop:hoa}, via Assumption~\ref{ass:hofd}), a genuine extra hypothesis not implied by Assumption~\ref{ass:support}. (ii) Assumption~\ref{ass:no-source-mod} permits shared modulators but leaves open the case where a source is also another edge's modulator. (iii) Structural-modulation-set recovery is unconditional within (i)--(ii); upgrading to listed-set recovery needs gate--interaction faithfulness (Assumption~\ref{ass:faithful}), clean under independence and generic (in the measure-zero sense above) under dependence.
\end{remark}

\subsection{Single-edge identifiability and the main theorem}
The previous subsection establishes when edge-level contributions can be uniquely assigned. We now address the remaining question: once an edge product is identified, can its multiplicative factors be recovered uniquely? The following lemma provides the key factorization result.

\begin{lemma}[Separated multiplicative factorization]
\label{lem:factorization}
Let $U=\prod_{r=1}^m U_r$ be a product of open connected sets, and $a_r,b_r:U_r\to\R_{>0}$ continuous with $\prod_r a_r(z_r)=\prod_r b_r(z_r)$ on $U$. Then $a_r=c_r b_r$ with $\prod_r c_r=1$; if $\E[a_r]=\E[b_r]=1$ for each $r$, then $c_r=1$ and $a_r=b_r$ marginally a.s.
\end{lemma}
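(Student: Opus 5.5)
The plan is to pass to logarithms and reduce the statement to the classical fact that an additively separable function that vanishes identically has constant summands. Since all $a_r,b_r$ are strictly positive and continuous, set $\alpha_r=\log a_r-\log b_r$, a continuous real function on $U_r$. The hypothesis $\prod_r a_r(z_r)=\prod_r b_r(z_r)$ on $U$ becomes
\[
\sum_{r=1}^m \alpha_r(z_r)=0\quad\text{for all } z=(z_1,\ldots,z_m)\in U.
\]
If the hypothesis is only given $p$-a.s., I would first apply Lemma~\ref{lem:continuity-upgrade} to upgrade it to an everywhere identity on $U$. That lemma applies because both sides are continuous on the open connected set $U$.

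The key step is to show that each $\alpha_r$ is constant.
\begin{itemize}
\item Fix $r$ and choose two points $z_r,z_r'\in U_r$.
\item Fix the remaining coordinates at some arbitrary $z_s\in U_s$, $s\neq r$. This is possible because $U$ is a product set, so $(z_r,z_{-r})$ and $(z_r',z_{-r})$ both lie in $U$.
\item Subtracting the two identities cancels every term except the $r$-th, giving $\alpha_r(z_r)=\alpha_r(z_r')$.
\end{itemize}
Hence $\alpha_r\equiv \log c_r$ for some constant $c_r>0$, that is, $a_r=c_r b_r$. Substituting back into the sum gives $\sum_r\log c_r=0$, i.e.\ $\prod_r c_r=1$. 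The product structure of $U$ is exactly what makes this work. On a non-product or lower-dimensional support one could not vary a single coordinate freely, which is the mechanism behind the later low-dimensional counterexample. Connectedness is not strictly needed for this step, but it is what lets Lemma~\ref{lem:continuity-upgrade} apply.

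For the normalization clause, take expectations of $a_r=c_r b_r$ under the marginal law of $z_r$. This gives $1=\E[a_r]=c_r\E[b_r]=c_r$, so $c_r=1$ and $a_r=b_r$ on $U_r$, hence marginally a.s. This uses that the $a_r,b_r$ are integrable, which is implicit in the normalization. Note that the normalization is imposed on every factor, including the one playing the role of $f$. In the application to edge products, where $f_{ij}$ is not normalized and may change sign, I would apply the lemma only to the gate factors; there the normalized gates force $c_k=1$, and then $\prod c_k=1$ forces the base-function constant to equal one as well.

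The main obstacle is not the algebra but making sure the hypotheses of the step "freeze all but one coordinate" are met. The identity must hold pointwise on the full product set, not merely almost surely. I therefore expect the only delicate point to be invoking Lemma~\ref{lem:continuity-upgrade} together with Assumption~\ref{ass:support}, so that the a.s.\ equality from observational equivalence becomes an everywhere equality before the slicing argument is applied.
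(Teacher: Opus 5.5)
Your proof is correct and follows the paper's argument essentially step for step. You take logarithms, use the product structure of $U$ to vary one coordinate at a time so that each $\log a_r-\log b_r$ is a constant $\log c_r$ with $\prod_r c_r=1$, and then take marginal expectations to get $c_r=1$. Your extra remarks, on upgrading an a.s.\ identity and on restricting the normalization to gate factors in Proposition~\ref{prop:edge-product}, go beyond what the lemma as stated requires but do not affect its proof.
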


\begin{proof}
Take logarithms; $u_r=\log a_r-\log b_r$ satisfies $\sum_r u_r(z_r)=0$ on $U$. Since $U$ is a product, varying $z_r$ alone with the rest fixed stays in $U$; subtracting two such equations shows $u_r$ is constant, $=\log c_r$, with $\prod_r c_r=1$. Normalization gives $c_r=\E[a_r]/\E[b_r]=1$.
\end{proof}

\begin{proposition}[Identifiability of one isolated edge]
\label{prop:edge-product}
Suppose $f(x_j)\prod_{k\in M}g_k(x_k)=f'(x_j)\prod_{k\in M}g'_k(x_k)$ on an open connected product set, with all functions satisfying Assumptions~\ref{ass:normalization} and \ref{ass:regularity}. Then $g_k=g'_k$ for every $k$ and $f=f'$ on $U_j$. Per-gate normalization alone suffices to remove the multiplicative scale ambiguity; no additional normalization of $f$ is required.
\end{proposition}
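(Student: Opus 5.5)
The plan is to reduce the statement to Lemma~\ref{lem:factorization}. Two features of the setting need handling first. The base function $f$ need not be positive and may vanish at isolated points, so we cannot take logarithms of the whole edge product. The gates, by contrast, are strictly positive. We therefore freeze the source coordinate at a single well-chosen point, identify the gates from the resulting positive identity, and then recover $f$ by dividing out the gates. Throughout, write $U=U_j\times\prod_{k\in M}U_k$ for the open connected product set on which
\[
f(x_j)\prod_{k\in M}g_k(x_k)=f'(x_j)\prod_{k\in M}g'_k(x_k)
\]
holds. If the identity is only known $p$-a.s., we first upgrade it to an everywhere identity on $U$ via Lemma~\ref{lem:continuity-upgrade}, using continuity from Assumption~\ref{ass:regularity}.

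\textbf{Step 1: choose an anchor point.} By Assumption~\ref{ass:regularity}, $f$ vanishes on no open subset of $U_j$, so there is a point $x_j^\ast\in U_j$ with $f(x_j^\ast)\neq0$. Since every $g_k$ and $g'_k$ is strictly positive, the identity at $x_j=x_j^\ast$ forces $f'(x_j^\ast)\neq0$, with the same sign as $f(x_j^\ast)$. Setting $r=f'(x_j^\ast)/f(x_j^\ast)>0$, we obtain
\[
\prod_{k\in M}g_k(x_k)=r\prod_{k\in M}g'_k(x_k)\qquad\text{for all }(x_k)_{k\in M}\in\prod_{k\in M}U_k .
\]

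\textbf{Step 2: identify the gates.} Take logarithms and set $u_k=\log g_k-\log g'_k$, so that $\sum_k u_k(x_k)=\log r$ on the product set. This is the situation of Lemma~\ref{lem:factorization}: the constant $r$ can be absorbed into any one factor, or we can directly vary one coordinate $x_k$ while holding the others fixed, which stays inside the product set. Either way, each $u_k$ is constant, so $g_k=c_k g'_k$ on $U_k$ with $\prod_k c_k=r$. The normalization $\E[g_k(x_k)]=\E[g'_k(x_k)]=1$ from Assumption~\ref{ass:normalization} then gives $c_k=\E[g_k]/\E[g'_k]=1$. Hence $g_k=g'_k$ everywhere on $U_k$, not merely a.s., by continuity. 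As a byproduct $r=1$, although that fact is not needed below.

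\textbf{Step 3: identify $f$.} Substitute $g'_k=g_k$ back into the full identity on $U$. For every $x_j\in U_j$, dividing by the strictly positive quantity $\prod_k g_k(x_k)$ at any fixed admissible $(x_k)_{k\in M}$ gives $f(x_j)=f'(x_j)$. This holds at every point of $U_j$, including any zeros of $f$, so no sign or zero-set argument is required there. Finally, the only normalization invoked was $\E[g_k]=1$ for the gates, which establishes the last claim of the proposition: per-gate normalization alone removes the scale ambiguity, and no normalization of $f$ is needed.

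\textbf{Main obstacle.} The delicate point is the possible sign changes and zeros of $f$, which block a direct application of Lemma~\ref{lem:factorization} to the full product. The anchor-point device in Step~1 is what circumvents this, and it is exactly where the nondegeneracy clause of Assumption~\ref{ass:regularity} is used. A second point to check is that the gate identity obtained in Step~2 does not depend on the choice of $x_j^\ast$. It does not: any two anchors yield gates that agree up to constants, and the normalization pins those constants to one.
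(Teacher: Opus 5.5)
Your proof is correct and follows the paper's argument essentially step for step. You anchor at a point $x_j^\ast$ with $f(x_j^\ast)\neq0$ to get $\prod_k g_k=r\prod_k g'_k$ with $r>0$, use the separated-factorization argument of Lemma~\ref{lem:factorization} together with $\E[g_k]=\E[g'_k]=1$ to force $g_k=g'_k$, and then cancel the strictly positive gate product to obtain $f=f'$. The only cosmetic difference is how the constant is placed: the paper distributes it as $C^{1/|M|}$ across all gates before invoking the lemma, whereas you absorb $r$ into one factor or rerun the logarithm argument directly, which is equivalent.
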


\begin{proof}
If $M=\varnothing$ the identity is $f=f'$ directly. Otherwise, gates are strictly positive, so we never take a logarithm of $f$. Choose $x_j^0$ with $f(x_j^0)\neq 0$ (possible by Assumption~\ref{ass:regularity}); evaluating at $x_j^0$ gives $\prod_k g_k=C\prod_k g'_k$ with $C=f'(x_j^0)/f(x_j^0)>0$. Apply Lemma~\ref{lem:factorization} to $\{g_k\}$ and $\{C^{1/|M|}g'_k\}$: this yields $g_k=c_k g'_k$ with $\prod_k c_k=C$, and normalization $\E[g_k]=\E[g'_k]=1$ forces $c_k=1$, hence $g_k=g'_k$ (pointwise by Lemma~\ref{lem:continuity-upgrade}) and $C=1$. With gates equal, the identity becomes $f(x_j)P=f'(x_j)P$ with $P=\prod_k g_k>0$; cancelling gives $f=f'$.
\end{proof}

Combining edge separability, support richness, regularity, and factorization identifiability yields the main population result.

\begin{theorem}[Population identifiability of normalized minimal \GNAVAR]
\label{thm:main}
Let $h_i$ admit two normalized minimal \GNAVAR{} representations satisfying Assumptions~\ref{ass:directsum}--\ref{ass:regularity}, observationally equivalent under $p$. Then:
\begin{enumerate}
\item the centered edge products agree, $F_{ij}=F'_{ij}$ $p$-a.s.\ for every $j$;
\item the modulator sets $M_{ij}$ are identified up to permutation and insertion/deletion of trivial gates;
\item for every active edge, after gate normalization the base function and all nontrivial gates are identified on the support ($f_{ij}=f'_{ij}$, $g_{ijk}=g'_{ijk}$); no residual scale gauge remains.
\end{enumerate}
\end{theorem}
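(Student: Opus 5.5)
The proof has three steps, matching the three claims. First we get edge-product agreement from the direct sum. Then we get modulator-set agreement from minimality and the continuity upgrade. Finally we get factor agreement from Proposition~\ref{prop:edge-product}.

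\emph{Step 1 (edge products).} Both representations are normalized, so they share the intercept $\beta_i=\E[h_i]$. Observational equivalence then gives $\sum_j (F_{ij}-F'_{ij})=0$ $p$-a.s. By Assumption~\ref{ass:directsum}, every admissible $F_{ij}$ and $F'_{ij}$ lies in the common ambient space $\mathcal V_{ij}$, which is fixed in advance through $M_{ij}^{\max}$. The normalization convention makes each edge term centered, so $V_j:=F_{ij}-F'_{ij}\in\mathcal V_{ij}$ and $\E[V_j]=0$. The direct-sum property then forces $V_j=0$ a.s. for each $j$, which is claim~1. This step is short, but I will note explicitly that it relies on both representations using the \emph{same} ambient spaces. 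That is exactly why the candidate sets are fixed independently of the representation.

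\emph{Step 2 (modulator sets).} I first upgrade a.s.\ equality to pointwise equality. The functions $F_{ij}$ and $F'_{ij}$ are continuous by Assumption~\ref{ass:regularity}. Take $\tilde S=\{j\}\cup M_{ij}\cup M'_{ij}$, viewed as a function on $U_{\tilde S}$. Assumption~\ref{ass:support} gives an open connected product support with positive density there, so Lemma~\ref{lem:continuity-upgrade} yields $F_{ij}=F'_{ij}$ everywhere on $U_{\tilde S}$. Now let $k\in M_{ij}$. Minimality says $F_{ij}$ is nonconstant in $x_k$ on some open product subset. If $k\notin M'_{ij}$, then $F'_{ij}$ does not depend on $x_k$ at all, and since it equals $F_{ij}$ pointwise, $F_{ij}$ would be constant in $x_k$. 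This is a contradiction, so $M_{ij}\subseteq M'_{ij}$, and symmetry gives equality. The sets are unordered, so the result holds up to permutation (Lemma~\ref{lem:perm}). Any listed trivial gates are removed beforehand by the minimality reduction.

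\emph{Step 3 (factors).} With $M_{ij}=M'_{ij}=:M$ and the pointwise identity $f\prod_{k\in M} g_k=f'\prod_{k\in M} g'_k$ on the open connected product set $U_{S_j}$, Proposition~\ref{prop:edge-product} gives $g_{ijk}=g'_{ijk}$ and $f_{ij}=f'_{ij}$. The gate normalization $\E[g]=1$ of Assumption~\ref{ass:normalization} is what removes the scale gauge of Lemma~\ref{lem:scale}, so no residual ambiguity remains.

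\emph{Main obstacle.} The main obstacle is Step~2, because it is where the support assumption must apply to the union set $\tilde S$ rather than to $S_j$ alone. Assumption~\ref{ass:support} is stated for sets $\{j\}\cup M_{ij}$ and their subsets. I will apply it to the candidate superset $S_j^{\max}\supseteq\tilde S$, reading the assumption as covering every set that can appear in an admissible representation. The other delicate point is the appeal to ``nonconstant in $x_k$'' in minimality. I will make it rigorous by fixing the other coordinates inside the open product set and varying only $x_k$, which stays inside $U$ by the product structure.
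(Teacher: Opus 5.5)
Your proposal is correct and follows the paper's proof step for step. The direct-sum assumption gives $F_{ij}=F'_{ij}$, minimality turns this into equality of the modulator sets, and Proposition~\ref{prop:edge-product} identifies the factors. The one place you go beyond the paper is in making the continuity upgrade explicit on the union set $\{j\}\cup M_{ij}\cup M'_{ij}$, reading Assumption~\ref{ass:support} as covering $S_j^{\max}$. The paper's Step~1 leaves this implicit, but it is genuinely needed. If $x_l=x_k$ a.s., then $f(x_j)g(x_k)$ and $f(x_j)g(x_l)$ are minimal representations that agree $p$-a.s.\ yet have modulator sets $\{k\}\neq\{l\}$, even though $\{j,k\}$ and $\{j,l\}$ each have full product support.
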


\begin{proof}
\emph{Step 1.} Observational equivalence gives $\sum_j(F_{ij}-F'_{ij})=0$ $p$-a.s. By Assumption~\ref{ass:directsum}, $F_{ij},F'_{ij}\in\mathcal V_{ij}$, so $F_{ij}-F'_{ij}\in\mathcal V_{ij}$, and the direct-sum property forces each to vanish: $F_{ij}=F'_{ij}$. Assumption~\ref{ass:support} and Lemma~\ref{lem:continuity-upgrade} upgrade this to pointwise equality. This is claim (1).

\emph{Step 2.} By Definition~\ref{def:minimal}, $k\in M_{ij}$ iff $F_{ij}$ varies in $x_k$ on an open product subset, a property of $F_{ij}$ alone. Since $F_{ij}=F'_{ij}$, the essential-modulator sets coincide: $M_{ij}=M'_{ij}$ up to trivial gates. This is claim (2).

\emph{Step 3.} With $M_{ij}=M'_{ij}=:M$, both representations are products over the same modulator set with common edge product, so Proposition~\ref{prop:edge-product} identifies $g_{ijk}=g'_{ijk}$ and $f_{ij}=f'_{ij}$ on the support. Gate normalization alone removes the scale, so no residual gauge remains. This is claim (3).
\end{proof}

\noindent\emph{What this means in practice.} The theorem tells a practitioner the precise conditions under which a recovered modulator can be trusted as a property of the data rather than of the fit: the candidate modulators must enter through a separable edge structure (no two gates competing to explain the same variation) and their joint support must be rich enough that no gate-difference can hide off the observed region. When both hold, the recovered modulator identities, the gate shapes, and even the multiplicative scale are pinned down. The two conditions map directly onto the two instruments of Section~\ref{sec:diagnostic}: support richness is what the effective rank measures before fitting, and separability failures surface as the seed-to-seed disagreement the stability check detects after fitting.

\begin{corollary}[Impossibility under low-dimensional support]
\label{cor:lowdim}
If the joint support of a candidate modulator set lies in a lower-dimensional manifold, the conclusion can fail: distinct normalized nonconstant gates may agree on the observed support while differing off it.
\end{corollary}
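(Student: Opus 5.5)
The corollary is an existence statement, so the plan is to exhibit one explicit counterexample. I would build two normalized minimal \GNAVAR{} representations that are observationally equivalent under a law $p$ whose modulator support is lower-dimensional, yet have different gates. The cleanest setting is a single edge $(i,j)$ with two candidate modulators $k,l$ (scalar lags, $K=1$ suffices). Take $x_j$ independent of $(x_k,x_l)$, with $x_j$ absolutely continuous on an open interval. Let $(x_k,x_l)$ be supported on the diagonal $\{x_k=x_l\}$, with the common value $u$ having a positive density on an interval. Then the joint support of the modulator set is a one-dimensional manifold in $\R^2$, so Assumption~\ref{ass:support} fails exactly where the theorem needs it.

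The construction runs in three steps.

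\textbf{Step 1: two factorizations that agree on the diagonal.} Pick a nonconstant continuous $\eta$ and set
\[
g_k(x_k)=\frac{e^{\eta(x_k)}}{\E[e^{\eta(u)}]},\qquad g_l\equiv 1,
\]
\[
g'_k\equiv 1,\qquad g'_l(x_l)=\frac{e^{\eta(x_l)}}{\E[e^{\eta(u)}]}.
\]
A symmetric variant splits the exponent as $\tfrac12\eta$ on each gate. Both representations use the same base function $f$, which vanishes on no open set.

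\textbf{Step 2: check the required properties.} On the support $x_k=x_l$, so the two edge products $f\,g_kg_l$ and $f\,g'_kg'_l$ coincide. Hence $h_i$ is identical $p$-a.s., which is observational equivalence in the sense of Definition~\ref{def:obseq}. Each gate is positive and continuous with $\E[g]=1$ under the marginal, so Assumptions~\ref{ass:normalization} and \ref{ass:regularity} hold. Each listed modulator is essential with respect to its own representation's edge product viewed as a function on the ambient product domain, so both representations are minimal.

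\textbf{Step 3: the gates differ off the support.} We have $g_k\neq g'_k$ as functions: $g_k$ is nonconstant while $g'_k\equiv1$. They differ at off-diagonal points $(x_k,x_l)$ with $\eta(x_k)\neq\eta(x_l)$. The modulator sets themselves also differ, $\{k\}$ versus $\{l\}$. So conclusions (2) and (3) of Theorem~\ref{thm:main} both fail.

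For the general statement I would embed any modulator pair whose support lies in a manifold $\{\phi(x_k)=x_l\}$ in the same way. Replacing the diagonal by the graph of $\phi$, the gates $e^{\eta(x_k)}$ and $e^{\eta(\phi^{-1}(x_l))}$ (renormalized) give the analogous pair.

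The main obstacle is minimality. Definition~\ref{def:minimal} asks for nonconstancy "on an open product subset of the support", and when the support is a null set in $\R^2$ there is no open product subset of it. Read literally, minimality is either vacuous or ill-posed. I would resolve this by interpreting essentiality relative to the ambient product domain on which the networks are defined. That reading is the one under which Theorem~\ref{thm:main} is meant to apply, and under it both representations are minimal. It is exactly the point the corollary stresses: the data cannot distinguish behavior off-support. A secondary check is that normalization is taken under the true marginal, and since both gates use the same marginal law of $u$, the normalizing constants match.
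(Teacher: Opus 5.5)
Your proposal is correct and is essentially the paper's argument. The paper puts $x_\ell=\alpha x_k$ on the support and trades exponent between two exponential gates along the line $a'+\alpha b'=a+\alpha b$; your transfer of the whole gate $e^{\eta}$ from $x_k$ to $x_l$ on the diagonal is the endpoint $(a,0)\to(0,a)$ of that family, generalized to nonlinear $\eta$. Your version has a small advantage: equal marginals make the normalizing constants match exactly, so the same $f$ serves both representations, whereas the paper's products agree only up to a constant that must silently be absorbed into $f$.

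The one slip is your aside. Splitting $\tfrac12\eta$ onto each gate does \emph{not} preserve the edge product with the same $f$, because $(\E[e^{\eta(u)/2}])^2<\E[e^{\eta(u)}]$ whenever $\eta(u)$ is nondegenerate. There you must either rescale $f$ (permitted, since $f$ is unnormalized) or instead compare the splits $(t,1-t)$ and $(1-t,t)$ with $t\neq\tfrac12$, which keeps all gates nonconstant and the constants equal.
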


\begin{proof}
Let $M=\{k,\ell\}$ with $x_\ell=\alpha x_k$ on the support. The normalized pair $g_k\propto e^{ax_k}$, $g_\ell\propto e^{bx_\ell}$ yields a product proportional to $e^{(a+\alpha b)x_k}$ on the support. Any $(a',b')$ with $a'+\alpha b'=a+\alpha b$ gives the same product there and satisfies the normalization, yet differs off the support. The one-parameter family is not reduced to a point by $\E[g]=1$, so support richness (Assumption~\ref{ass:support}) is necessary.
\end{proof}

\noindent\emph{What this means in practice.} Collinear candidate modulators are not merely a statistical-power problem that more data would solve; they make the recovery target genuinely undefined. The corollary is the formal reason the diagnostic is a \emph{pre-fit} test: because the obstruction lives in the support geometry, it can be detected from the inputs alone, before a single gate is trained, because no amount of optimization can identify a decomposition that the support itself does not distinguish.

\section{A Pre-Fit Diagnostic and a Stability Check}
\label{sec:diagnostic}

Theorem~\ref{thm:main} and Corollary~\ref{cor:lowdim} together say that identifiability hinges on the richness of the joint support of the candidate modulators: when that support collapses onto a lower-dimensional set, distinct gates agree on it and recovery is not identifiable. We now turn this support condition into a quantity computable \emph{before} any fit, and argue for the specific functional we use.

\paragraph{From support geometry to a second-moment proxy.}
The support condition is about the geometry of the region the lag-blocks occupy. Its linear shadow is exact and already decisive: if the joint lag-block covariance $\Sigma_S$ for a candidate set $S$ is rank-deficient, the lag-blocks lie in a proper linear subspace, and any two gate configurations differing only along an unoccupied direction are observationally indistinguishable, the degenerate case of Corollary~\ref{cor:lowdim}. So full-dimensional support is \emph{necessary} for identifiability, and the rank of $\Sigma_S$ measures it. Exact rank-deficiency is non-generic, however; in practice support is \emph{nearly} degenerate, concentrated near a lower-dimensional set, and identifiability degrades continuously as this happens (the $\rho$-sweep in Section~\ref{sec:exp-synth} exhibits exactly this continuous decay). We therefore need a continuous measure of how close $\Sigma_S$ is to rank-deficiency, not a binary rank.

\paragraph{Why effective rank.}
We use the participation ratio of the spectrum of $\Sigma_S$,
\[
 \reff(S) = \frac{(\operatorname{tr}\Sigma_S)^2}{\operatorname{tr}(\Sigma_S^2)} = \frac{\big(\sum_i \lambda_i\big)^2}{\sum_i \lambda_i^2},
\]
where $\lambda_i$ are the eigenvalues of $\Sigma_S$. It equals the ambient dimension $|S|\cdot K$ when the spectrum is flat (support spread equally across all directions), decreases continuously as eigenvalue mass concentrates, approaches $1$ as the support collapses onto one direction, is bounded in $[1,|S|\cdot K]$, and is invariant to the overall scale of $\Sigma_S$. It is thus a smooth count of how many directions the support effectively occupies, precisely the geometric content the theorem requires, and it is a closed-form function of the first two spectral moments, computable from one eigendecomposition with no training, no threshold, and no tuning. Alternative summaries are less aligned with the theorem's geometry. The condition number $\lambda_{\max}/\lambda_{\min}$ depends only on the extreme eigenvalues and ignores the bulk of the spectrum; numerical rank requires an arbitrary threshold and changes discontinuously; nonlinear measures such as mutual information or intrinsic-dimension estimators require density estimation and tuning.

\paragraph{A necessary-but-not-sufficient diagnostic} 
Because $\reff$ is a second-moment summary, its implication runs in only one direction: low $\reff$ \emph{certifies} approximate degeneracy and hence near-non-identifiability, whereas high $\reff$ is \emph{necessary but not sufficient}; support can be full-dimensional in covariance yet still harbor nonlinear degeneracies or interactions of higher order than the fitted form, which a linear summary cannot see. The diagnostic also presupposes a candidate set $S$: it scores whether a chosen set has rich enough support, not which variables to include. We therefore pair it with a post-fit \emph{two-seed stability check}: two independent fits should agree on the recovered modulators if the solution is reproducible, so disagreement is evidence against reliable recovery regardless of $\reff$. The instruments divide labor: $\reff$ is the cheap pre-fit screen that explains \emph{why} recovery should fail (support geometry), while the stability check is the post-fit test that detects \emph{that} it failed, including for reasons beyond the covariance's reach.

\subsection{A practical workflow for reliable interaction discovery}
\label{sec:workflow}

The theory and the two instruments combine into a simple procedure a practitioner can follow before trusting any discovered modulator (Figure~\ref{fig:workflow}).

\begin{figure}[t]
\centering
\setlength{\fboxsep}{6pt}
\fbox{\parbox{0.86\columnwidth}{\centering\small
candidate variables\\[2pt]
$\downarrow$\\[2pt]
\textbf{pre-fit:} compute effective rank $\reff$ of the joint lag-block covariance\\[2pt]
$\downarrow$\\[2pt]
\emph{$\reff$ low} $\Rightarrow$ recovery unreliable; stop or revise the candidate set\\[4pt]
\emph{$\reff$ adequate} $\Rightarrow$ proceed\\[2pt]
$\downarrow$\\[2pt]
\textbf{fit \GNAVAR{} twice} from independent random seeds\\[2pt]
$\downarrow$\\[2pt]
compare recovered modulator rankings\\[4pt]
\begin{tabular}{c@{\hspace{2.5em}}c}
\emph{agree} & \emph{disagree} \\
$\downarrow$ & $\downarrow$ \\
interpret the modulator & do \textbf{not} interpret \\
\end{tabular}
}}
\caption{Practical workflow. The effective-rank screen is a cheap pre-fit filter; the two-seed check is a post-fit falsification test. The asymmetry is deliberate (see text): instability is evidence against reliable recovery, but stability is supporting evidence, not proof of correctness.}
\label{fig:workflow}
\end{figure}

The asymmetry between the two outcomes is essential and easy to get wrong: instability is strong evidence \emph{against} reliable recovery, but stability is only supporting evidence, not proof of correctness. Two seeds can agree on a wrong answer if, for instance, the true interaction is of higher order than the fitted pairwise gates. The workflow therefore licenses a negative conclusion firmly and a positive one provisionally. When $\reff$ is low, the remedy is to revise the candidate set: drop mutually redundant variables, add genuinely independent ones - not to train longer.

\subsection{Failure modes}
\label{sec:failures}

The three empirical states do not arise for a single reason. The theory identifies four distinct mechanisms under which interaction discovery should not be trusted. \textbf{(F1) Concentrated support:} the candidate modulators are mutually collinear, so $\reff$ is low and distinct gates are observationally indistinguishable (Corollary~\ref{cor:lowdim}); this is the realized-volatility regime. \textbf{(F2) Missing modulator:} the true modulator is absent from the candidate set, so the diagnostic, which scores only the chosen set, cannot flag it; the two-seed check offers partial protection, since a model fit on an incomplete set often recovers unstably. \textbf{(F3) Finite-sample instability:} the population support is adequate but the sample is too small to realize it, so recovery is noisy despite population identifiability; this is the small-$T$ end of the synthetic recovery curve, and it is distinguishable from F1 because $\reff$ is adequate. \textbf{(F4) Order misspecification:} the true interaction is genuinely higher-order than the fitted base-times-gates form, so even with rich support and large samples the single-modulator reading is unstable while the model fits well - the regime discussed in Section~\ref{sec:order}. F1 and F2 are caught before fitting and after fitting respectively; F3 is mitigated by more data; F4 is the one that most resembles success while not being it, and is the strongest reason to treat a stable two-seed result as provisional rather than conclusive.

\section{Synthetic Experiments}
\label{sec:exp-synth}

\paragraph{Setup} We use a five-variable stationary system with target $x_1$ and four independent AR(1) sources, lag order $K=2$. The target is
\begin{align*}
 x_{1,t} = {}& f_{12}(x_{2,t-1})\,g_{123}(x_{3,t-1:t-2}) \\
 &+ f_{14}(x_{4,t-1})\,g_{145}(x_{5,t-1:t-2}) + 0.3\,x_{2,t-2} + \sigma_\epsilon\eta_t,
\end{align*}
with linear bases, a saturation-times-change gate $g_{123}$, a 2D Gaussian inhibition gate $g_{145}$, and $\sigma_\epsilon=0.1$ (noise floor $0.01$). The true modulators are $x_3$ for the $x_2$-edge and $x_5$ for the $x_4$-edge. \GNAVAR{} uses one base per source and one gate per (source, modulator) pair ($\approx 2{,}000$ parameters), trained with Adam and an $L^1$ penalty on gate deviation from one ($\lambda=0.005$) that operationalizes the minimality of Definition~\ref{def:minimal}. We score modulator-set recovery (both edges correct) and gauge-normalized gate $L^2$ error relative to the ground-truth gates.

\paragraph{Recovery at scale} Across $T\in\{1{,}000;5{,}000;25{,}000;100{,}000\}$ with five seeds each, modulator-set recovery converges monotonically: $0/5$ at $T=1{,}000$, $4/5$ at $5{,}000$, and $5/5$ at $T\ge 25{,}000$. Gate $L^2$ error declines with $T$ (e.g.\ $g_{123}$: $0.31\to0.27$; $g_{145}$: $0.29\to0.15$) and plateaus at moderate values, reflecting the finite-support smoothing discussed in Section~\ref{sec:theory}; the recovered gate shapes match the true 2D structure.

\paragraph{Additive baseline} Against a same-capacity additive (Pairwise) \NAVAR{} with no gates (517 vs.\ 2{,}065 parameters) at $T=100{,}000$ over five seeds, \GNAVAR{} achieves held-out MSE $0.0102\pm0.00006$ versus $0.1716\pm0.0012$, a $16.8\times$ advantage, winning every seed (Fig.~\ref{fig:baseline}). Both models' train and test MSE nearly coincide (Pairwise $0.170/0.172$), so Pairwise's plateau is a structural ceiling, not overfitting: an additive class can absorb a gate's mean but not its variance. The gap is largest on the high-variance gate $g_{123}$ (edge-contribution error $0.44$ vs.\ $0.26$) and narrows on the milder $g_{145}$ ($0.17$ vs.\ $0.15$), the expected signature of multiplicative structure.

\begin{figure}[t]
\centering
\includegraphics[width=\columnwidth]{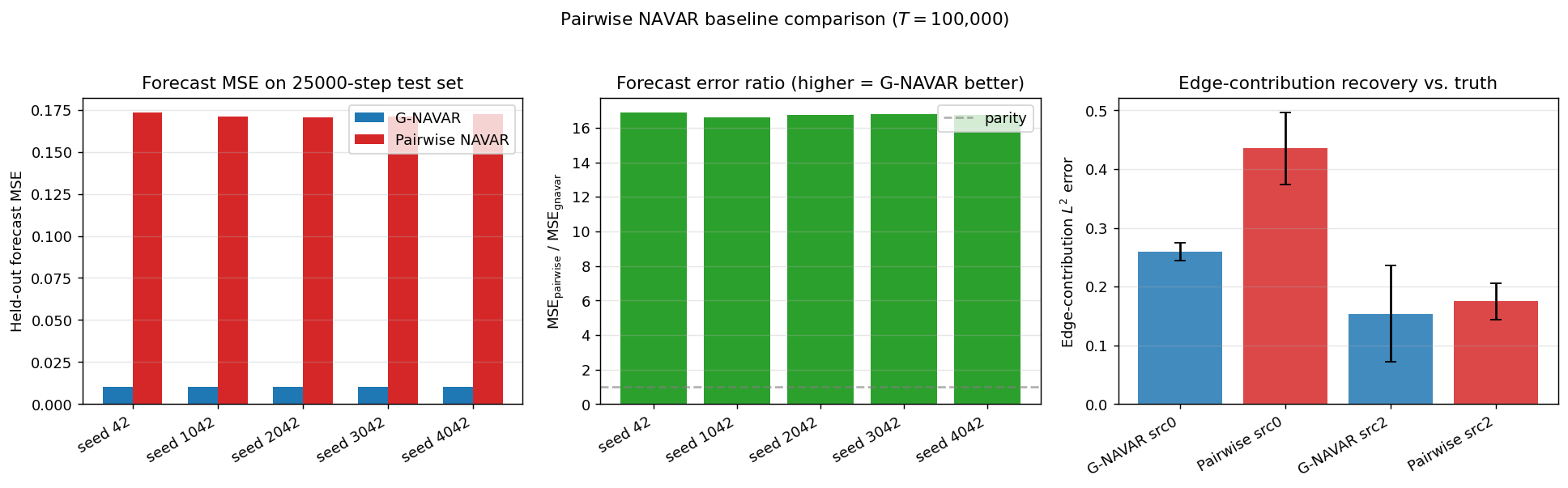}
\caption{Additive baseline at $T=100{,}000$. Left: per-seed held-out forecast MSE, \GNAVAR{} (blue) versus Pairwise \NAVAR{} (red). Center: the per-seed MSE ratio, tightly clustered near $16.8\times$ (parity dashed). Right: edge-contribution $L^2$ error to the true modulated edges; the gap is large on the high-variance gate (src0) and narrow on the milder one (src2), the signature of multiplicative structure additive models cannot absorb.}
\label{fig:baseline}
\end{figure}

\paragraph{Does gating earn its place?} The additive comparison shows interactions are necessary, but not that the multiplicative \emph{gating} form is. We therefore add two capacity-matched competitors (both $\ge$ \GNAVAR{} in parameters, so \GNAVAR{} cannot win by starvation): a black-box MLP over the flattened lag window (a pure forecaster, no interpretable structure), which tests whether \GNAVAR{} pays an accuracy price for interpretability; and an additive-plus-pairwise model $y=\text{bias}+\sum_j f_j(x_j)+\sum_{(j,k)} h_{j,k}(x_j,x_k)$ (a functional-ANOVA order-2 / GA\textsuperscript{2}M model) that \emph{can} represent the interaction but not via the gated parameterization, which tests whether gating is needed for recovery. Over five seeds at each of $T\in\{5{,}000;25{,}000;100{,}000\}$ (Table~\ref{tab:gating}), three findings emerge. First, additive \NAVAR{} is far behind on MSE ($\sim$16$\times$), confirming interactions matter. Second, \GNAVAR, the GA\textsuperscript{2}M model, and the black-box MLP are statistically indistinguishable on held-out MSE ($\le 0.0001$ apart at $T\ge 25{,}000$): \GNAVAR{} pays \emph{no} accuracy penalty for its interpretable gated form. Third, on \emph{recovery} of the true interacting pairs (measured by ANOVA-centered interaction mass for the GA\textsuperscript{2}M model and top-gate identity for \GNAVAR), both interaction-aware models recover the true interaction structure under this rich-support regime - the GA\textsuperscript{2}M model in $15/15$ runs, \GNAVAR{} in $12/15$ (its three misses occur under the same initialization seed across all sample sizes, a concrete instance of the seed-sensitivity the stability check is designed to detect). The reading is deliberately modest: under rich support, recoverability is governed primarily by the support condition rather than the particular interaction parameterization, and multiple interaction-aware models succeed. \GNAVAR{}'s contribution is not that it forecasts better, but that it is the parameterization for which identifiability is \emph{provable} (Section~\ref{sec:theory}); the empirical role of this comparison is to establish that this analyzability costs nothing in predictive power.

\begin{table}[t]
\centering
\small
\caption{Capacity-matched comparison on synthetic data (mean over five seeds)$^{\mathrm{a}}$} 
\label{tab:gating}
\setlength{\tabcolsep}{4pt}
\begin{tabular}{l c c c}
\toprule
Model & MSE ($T{=}25$k) & MSE ($T{=}100$k) & Recovers \\
\midrule
Additive \NAVAR        & $0.173$  & $0.172$  & --- \\
Black-box MLP          & $0.0114$ & $0.0106$ & n/a \\
GA\textsuperscript{2}M & $0.0115$ & $0.0106$ & $15/15$ \\
\GNAVAR                & $0.0113$ & $0.0106$ & $12/15$ \\
\bottomrule
\multicolumn{4}{p{0.95\columnwidth}}{\footnotesize
$^{a}$ Additive \NAVAR{} cannot represent interactions; the GA\textsuperscript{2}M model and black-box MLP can. \GNAVAR{} matches both on held-out MSE while remaining the only model in this comparison for which an identifiability guarantee is available. ``Recovers'' is the fraction of seeds recovering the true interacting structure.}
\end{tabular}
\end{table}

\paragraph{Support-collapse transition} Sweeping a correlation $\rho$ between the two modulators at $T=25{,}000$ (ten seeds, restart-and-keep-best), the effective rank $\reff(\{x_3,x_5\})$ decreases monotonically from $3.00$ to $1.54$ as $\rho\to1$, and modulator-set recovery, noisy across the mid-range, collapses to $0/10$ at $\rho\in\{0.99,1.0\}$ where $\reff\to K$. Plotting gate $L^2$ error directly against $\reff$ (Fig.~\ref{fig:effrank}) reveals the relationship more clearly than the correlation parameter$\rho$: error rises and its spread widens as $\reff$ falls toward $K$, the degenerate-support regime of Corollary~\ref{cor:lowdim}. Two independent fits on the same data also diverge there - an identifiability gap visible as the seed-dependence the theory predicts, supporting the claim that $\reff$, rather than $\rho$ itself, tracks the boundary of recoverability.

\begin{figure}[t]
\centering
\includegraphics[width=\columnwidth]{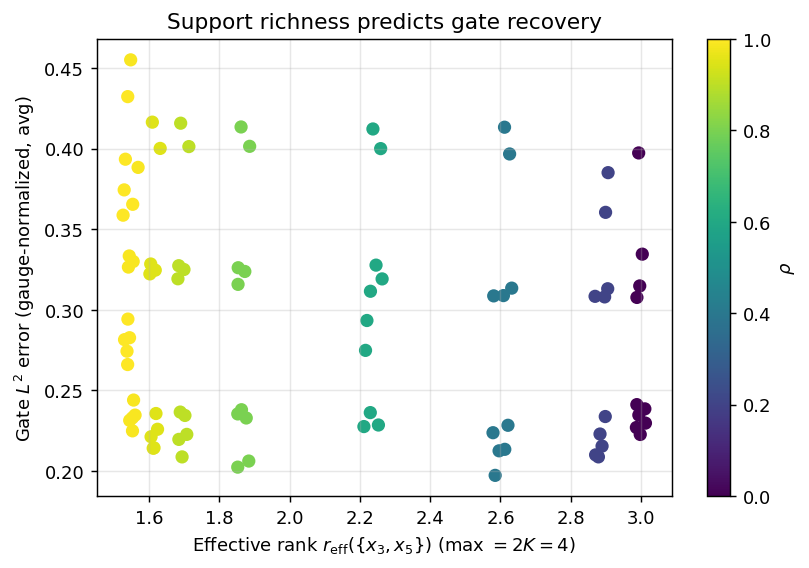}
\caption{Effective rank as a predictor of recovery. Gauge-normalized gate $L^2$ error versus the pre-fit effective rank $\reff(\{x_3,x_5\})$ (maximum $2K=4$), colored by the modulator correlation $\rho$. As $\reff$ falls toward $K=2$ (high $\rho$, right-to-left), error rises and spreads, marking the degenerate-support regime where gates are not identifiable.}
\label{fig:effrank}
\end{figure}

\section{Real-Data Experiments}
\label{sec:exp-real}

The synthetic study validates recovery where the structure is known; the decisive test is whether the diagnostic \emph{distinguishes} regimes on real data. We apply \GNAVAR{} to three domains that together exhibit all three empirical states the theory allows: recoverable structure under rich support (Beijing air quality), no stably recoverable structure \emph{despite} rich support (WDI development indicators), and no recovery because the support has collapsed (realized volatility). Table~\ref{tab:taxonomy} summarizes the pattern. The middle domain is the decisive one: without it, the taxonomy could suggest that rich support is \emph{sufficient} for recovery, whereas it shows that the stability check is required to separate the two high-rank regimes.

\begin{table}[t]
\centering
\caption{An empirical taxonomy of identifiability$^{\mathrm{a}}$} 
\label{tab:taxonomy}
\setlength{\tabcolsep}{3.5pt}
\begin{tabular}{lcccc}
\toprule
Domain & $\reff$ & Seed agr. & Interaction & Outcome \\
\midrule
Beijing & high & yes & strong & recoverable \\
WDI     & high & no  & weak   & unstable \\
RV      & low  & no  & ---    & support collapse \\
\bottomrule
\multicolumn{5}{p{0.95\columnwidth}}{\footnotesize
$^{a}$ The three domains realize the three empirical states predicted by the theory. Low rank precludes recovery; high rank separates into recoverable and unstable regimes according to the stability check.}
\end{tabular}
\end{table}

\subsection{Beijing air quality (rich support)}

The Beijing Multi-Site Air-Quality dataset~\cite{zhang2017beijing} provides hourly readings of six pollutants and six meteorological variables at twelve monitoring stations over 2013--2017; we use four geographically diverse sites (Nongzhanguan, Tiantan, Huairou, Gucheng), building lag tensors within contiguous clean runs and splitting chronologically $80/20$. We examine two textbook hypotheses: that temperature (TEMP) modulates the NO$_2\to$O$_3$ edge (ozone photochemistry, Setup A), and that wind speed (WSPM) modulates the PM2.5$\to$PM10 edge (dispersion, Setup B). We rank candidate modulators of the expected edge by gate triviality score $\E[(g-1)^2]$.

In Setup A, $\reff>4$ at all four sites, and \textbf{TEMP is the top-ranked modulator of NO$_2\to$O$_3$ at every site}, with margins over the second candidate ranging from $2.0\times$ to $59.3\times$ (Table~\ref{tab:beijing}). The recovered modulation pattern is consistent with textbook ozone photochemistry and emerges without supervision from observational data. In Setup B, $\reff<3.02$ at all sites; the expected modulator WSPM never ranks first (rank 3 at three sites, rank 2 at one), and dewpoint emerges as the top modulator at three of four sites, consistent with hygroscopic particle growth, which we report as the model's empirical finding rather than a confirmed mechanism. Across both setups, \GNAVAR{} wins held-out MSE in 6 of 8 cases (advantages 4--19\% where it wins). The predictive gains are modest relative to the synthetic experiments, which is expected because the real-data objective is interaction discovery rather than recovery of a known generating mechanism. The diagnostic separates the two regimes: $\reff>4$ coincides with rank-1 recovery in every Setup A case; $\reff<3$ coincides with non-recovery in every Setup B case (Fig.~\ref{fig:beijing-fig}).

\begin{figure*}[t]
\centering
\includegraphics[width=0.92\textwidth]{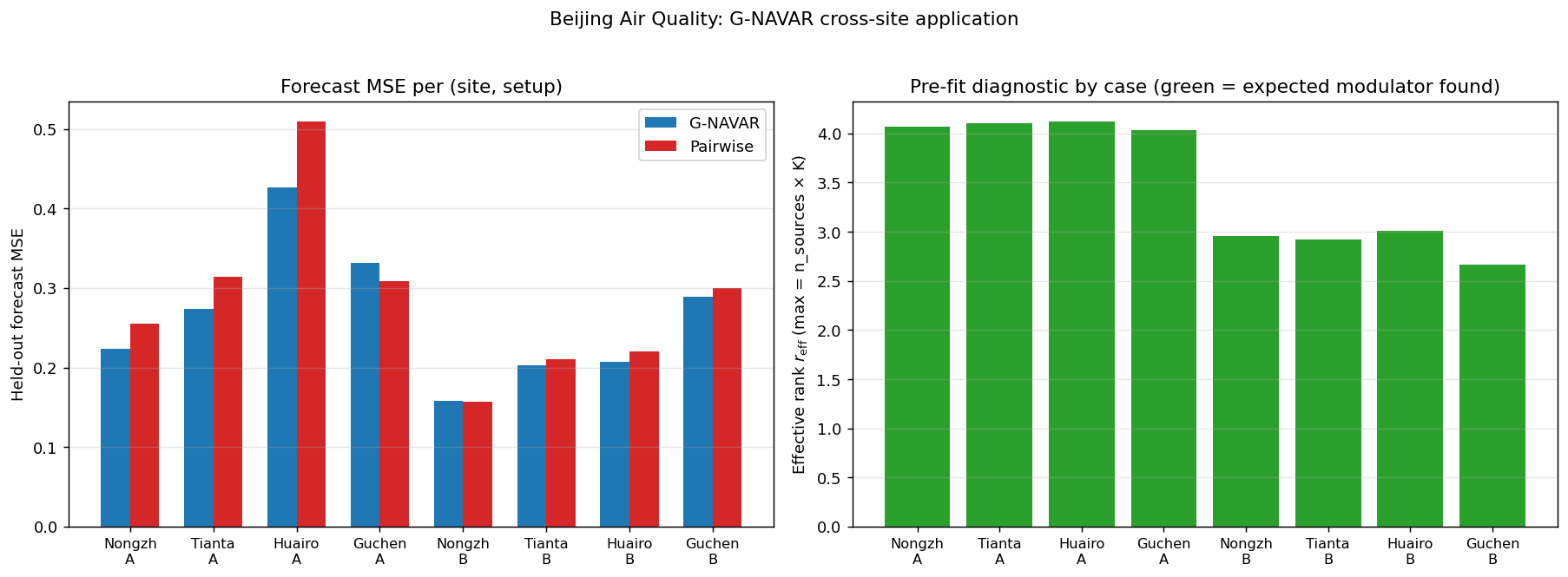}
\caption{Beijing air quality, four sites $\times$ two setups (A: NO$_2\to$O$_3$; B: PM2.5$\to$PM10). Left: held-out forecast MSE, \GNAVAR{} (blue) versus Pairwise (red); \GNAVAR{} wins 6 of 8 cases. Right: pre-fit effective rank $\reff$ per case. The Setup A cases sit at $\reff>4$ (where TEMP is recovered as the top modulator at all four sites); the Setup B cases at $\reff<3$ (where the expected modulator is not recovered). The pre-fit $\reff$ separates the two regimes before any fitting.}
\label{fig:beijing-fig}
\end{figure*}

\begin{table}[t]
\centering
\caption{Beijing Setup A (NO$_2\to$O$_3$)$^{\mathrm{a}}$} 
\label{tab:beijing}
\begin{tabular}{lccc}
\toprule
Site & $\reff$ & Top modulator & Margin over 2nd \\
\midrule
Nongzhanguan & 4.065 & TEMP & $10.9\times$ \\
Tiantan & 4.107 & TEMP & $2.0\times$ \\
Huairou & 4.119 & TEMP & $16.9\times$ \\
Gucheng & 4.035 & TEMP & $59.3\times$ \\
\bottomrule
\multicolumn{4}{p{0.95\columnwidth}}{\footnotesize
$^{a}$ TEMP is the top-ranked modulator at all four sites, with margins ranging from $2.0\times$ to $59.3\times$ over the second-ranked candidate.}
\end{tabular}
\end{table}

\subsection{WDI development indicators (rich support, weak structure)}
\label{sec:wdi}

The Beijing result alone might suggest that rich support is sufficient for recovery. It is not, and our second real domain shows why. We use a World Development Indicators panel~\cite{worldbank2024wdi} (265 units, 1970--2023) and ask, in the spirit of the conditional resource-curse hypothesis, whether the effect of resource rents on GDP growth is modulated by the investment environment. We pre-committed to a prediction: among candidate modulators (investment, trade openness, FDI), investment ranks first, seed-stably. The candidate set screens as rich, $\reff = 4.47$ of $4$, so the diagnostic indicates that recovery is feasible. We winsorize all series at the training $1$st/$99$th percentiles, standard treatment for a heavy-tailed growth target.

The prediction was not borne out, and the manner of its failure is the point. This is precisely the high-rank regime for which the theory predicts that support richness alone is not decisive. The two seeds disagree on the top modulator (investment versus trade openness); the gate scores are small and their slopes change sign across seeds; and \GNAVAR{} does not improve on the additive baseline (held-out MSE $1.26$ versus $1.19$). We state the conclusion at the strength the evidence supports: there is \emph{no evidence of a strong, stably recoverable interaction under this specification}. We do not claim that no interaction exists. Rather, the recovered modulators are not reproducible under independent optimization and therefore cannot be interpreted reliably. This is the regime the effective rank cannot rule out by construction (Section~\ref{sec:diagnostic}): the support is adequate, so the obstruction is not degeneracy, and only the post-fit stability check reveals that the apparent modulator is not reproducible.

\subsection{Realized volatility (collapsed support)}
\label{sec:rv}

Our third domain is the negative-control case: one that the diagnostic flags as collapsed before any fit. We use daily realized volatility for eight international equity indices~\cite{son2023forecasting}: S\&P 500, DAX, CAC 40, FTSE 100, OMX Stockholm, Nikkei 225, KOSPI, and Hang Seng, spanning the US, Europe, and Asia, with $2{,}615$ trading days per series, constructed from intraday returns and log-transformed and z-scored within series. Unlike Beijing's meteorological drivers or WDI's heterogeneous development indicators, realized-volatility series are highly persistent (lag-1 autocorrelation $0.76$--$0.84$) and strongly cross-correlated (up to $0.94$): the markets move together, so the joint lag-block support concentrates in only a few directions. The diagnostic captures this directly: for all four targets we examine (FTSE, GDAXI, N225, HSI), the pre-fit $\reff<2$. The literature treats the S\&P 500 as the global volatility leader, so a natural hypothesis is that SPX dominates as a modulator of within-region spillover - the kind of hypothesis that succeeded on Beijing. We pre-committed to the prediction that recovery would be \emph{unreliable}, and tested four consequences.

Three of four held. SPX ranks first in only $5/12$ peer edges; the top modulator differs across all four targets (SPX, FCHI, HSI, FTSE); and two independent seeds agree on the top modulator only $44\%$ of the time (Table~\ref{tab:rv}). \GNAVAR{} beats the additive baseline in just $1/4$ cases (versus $6/8$ on Beijing). The one failed prediction is instructive. The failure occurs in exactly the dimension the theory does not require: within-fit margins are \emph{not} small ($1.9$--$5.8\times$). Each fit confidently selects a dominant modulator, but \emph{which} one is unstable across seeds, the model is confidently arbitrary, not appropriately uncertain. This is arguably a sharper signature of non-identifiability than small margins would be, and it is exactly what the seed-stability check detects.

\begin{table}[t]
\centering
\caption{Realized volatility (collapsed support)$^{\mathrm{a}}$}
\label{tab:rv}
\setlength{\tabcolsep}{4pt}
\begin{tabular}{lccccc}
\toprule
Target & $\reff$ & Top mod. & Margin & Seed agr. & MSE ratio \\
\midrule
FTSE  & $1.50$ & SPX  & $5.5\times$ & $0.50$ & $1.00\times$ \\
GDAXI & $1.49$ & FCHI & $1.9\times$ & $0.25$ & $0.99\times$ \\
N225  & $1.77$ & HSI  & $5.8\times$ & $0.25$ & $0.95\times$ \\
HSI   & $1.97$ & FTSE & $4.8\times$ & $0.75$ & $0.93\times$ \\
\bottomrule
\multicolumn{6}{p{0.95\columnwidth}}{\footnotesize
$^{a}$  Pre-fit $\reff<2$ at every target; the top modulator differs across all four, within-fit margins are deceptively large, and two seeds agree on the top modulator under half the time, consistent with the support-collapse regime predicted by the theory. MSE ratio is additive/\GNAVAR{} ($\le 1$ means no \GNAVAR{} advantage).}
\end{tabular}
\end{table}

\subsection{The contrast}

The three domains (Table~\ref{tab:taxonomy}) realize the three regimes under a single model and pipeline: clean cross-site recovery under rich support (Beijing), no stably recoverable structure despite rich support (WDI), and unstable selection under collapsed support (realized volatility). The pre-fit screen and the post-fit stability check together identify which regime a practitioner occupies before any discovered interaction is interpreted.

\section{Related Work}
\label{sec:related}

\GNAVAR{} extends additive neural Granger methods~\cite{bussmann2021navar,tank2022neural} to interaction-aware dynamics. Our decomposition and its identification draw on additive and functional ANOVA models~\cite{stone1985additive,hastie1990generalized,gu2002smoothing} and, crucially for dependent inputs, the generalized Hoeffding--Sobol decomposition~\cite{hooker2007generalized,chastaing2012generalized}. The identifiability question parallels uniqueness results in tensor decompositions~\cite{kruskal1977threeway,kolda2009tensor,anandkumar2014tensor} and identifiability in nonlinear ICA~\cite{hyvarinen2019nonlinear,khemakhem2020variational}; our impossibility results echo the support-dependent disentanglement impossibility of~\cite{locatello2019challenging}. Unlike these literatures, however, our primary object is not representation learning itself but the recoverability of interaction structure from observational time series.

A large body of work \emph{discovers} interactions in time series: neural relational inference infers a latent interaction graph among components~\cite{kipf2018neural}; attention-based forecasters expose pairwise dependencies through learned attention weights~\cite{vaswani2017attention,wu2020connecting}; and statistical interaction detection reads interactions off trained network weights~\cite{tsang2018detecting} or selects them under hierarchy constraints~\cite{friedman2008predictive,bien2013lasso}. These methods address a problem complementary to ours: they propose \emph{which} interactions are present, not \emph{whether} a proposed interaction is recoverable from the observational distribution. Because the obstruction we study is a property of the input support rather than the fitted model, it applies to these methods as much as to \GNAVAR: an attention map or inferred edge computed on collapsed support faces the same instability, with recovered structure reflecting optimization artifacts rather than uniquely identifiable interactions. Our contribution is to make that obstruction precise in a setting where it can be analyzed formally, to characterize when interaction recovery is identifiable, and to supply a pre-fit diagnostic that can be applied before trusting a discovered interaction.

\section{Discussion: Identifiability of Interaction Order}
\label{sec:order}

The WDI result suggests a broader question than gate recovery alone. Our theory takes the interaction structure as given, a base function multiplied by single-variable gates, and asks when those gates are identifiable. The deeper question is whether the \emph{order} of an interaction, that is, whether a target dependence is two-way, three-way, or higher, is itself identifiable from the observational distribution. The question matters because a model that assumes the wrong order is misspecified in a way that no amount of data fixes. The functional ANOVA decomposition gives the order a representation-independent definition of interaction: the order of a dependence is the highest-order orthogonal component with nonzero variance that cannot be absorbed into lower-order terms. Whether that component is identifiable is then governed by the same support conditions that govern gate identifiability here. On full-dimensional product support a genuinely $k$-way function cannot be written as a sum of lower-order terms, so the order is identified; under rank-collapsed support it can: a three-way product $x_1x_2x_3$ restricted to $x_3=x_1$ is exactly the two-way function $x_1^2x_2$, and the order is unidentifiable. Between these extremes, strong correlation can degrade order identifiability well before complete rank collapse: as inputs become increasingly collinear, higher-order interactions become progressively harder to distinguish from lower-order structure. This is the order-level analog of the gate non-identifiability our diagnostic detects: when candidate modulators are mutually correlated, the data may not determine the interaction order, and a model forced to commit to one will commit arbitrarily, producing rich effective rank yet unstable recovery under reseeding. A full treatment of when interaction order is identifiable, and of estimators that recover it, is beyond our scope here; we note only that it appears to be governed by conditions of the same character as those in our main theorem.

\section{Conclusion}
\label{sec:conclusion}

Interaction capacity is not interaction identifiability. We showed that gated neural autoregression can represent effect modification but that its gates are not recoverable from observational data without structural conditions: dependence lets edge terms leak, and low-dimensional support makes distinct gates indistinguishable. Under a direct-sum separability condition, rigorous for disjoint supports, and extended to shared modulators via a hierarchically orthogonal decomposition under stated assumptions, normalized minimal \GNAVAR{} is identified up to permutation and trivial gates, with scale fixed by gate normalization. The resulting picture is a necessary-but-not-sufficient one: rich support makes recovery feasible, but only stability determines whether a recovered interaction is reproducible. The theory yields a pre-fit effective-rank diagnostic and a post-fit stability check that, across three real domains, distinguish three empirical regimes: recoverable interaction structure, unstable recovery despite adequate support, and non-identifiability under support collapse. More broadly, interaction discovery is constrained not only by model capacity but by the geometry of the observational distribution itself. 

All experiments were run on NVIDIA GPU via Colab/Blackwell. Total runtime end-to-end is about 22 minutes. All code and result artifacts are available at an anonymized repository.\footnote{\texttt{https://anonymous.4open.science/r/ICDM-GNAVAR-EDAE/}}

\end{document}